\documentclass[twoside,11pt]{article}

\usepackage[utf8]{inputenc} 
\usepackage[T1]{fontenc} 
\usepackage{times}
\usepackage{color}
\usepackage{multirow}
\usepackage{adjustbox} 




\usepackage{multirow}
\usepackage{subcaption}

\usepackage{times}
\usepackage{latexsym}
\usepackage{courier}
\usepackage{graphicx}
\usepackage{amsmath}
\usepackage{amssymb}
\usepackage{multirow}
\usepackage{dsfont}
\usepackage{booktabs}
\usepackage{array}
\usepackage{graphicx}
\usepackage{todonotes}
\usepackage{picinpar}
\usepackage{multirow}
\usepackage[normalem]{ulem}

\usepackage{arydshln}
\sloppy
\allowdisplaybreaks

\usepackage{amsthm}
\usepackage{pdfpages} 
\usepackage[ruled, vlined, linesnumbered, noresetcount]{algorithm2e}

\usepackage{caption}
\usepackage{subcaption}
\usepackage{booktabs}

\newcommand{\squishlist}{
\begin{list}{$\bullet$}{ 
    \setlength{\itemsep}{1pt}
    \setlength{\parsep}{0pt}
    \setlength{\topsep}{1.5pt}
    \setlength{\partopsep}{0pt}
    \setlength{\leftmargin}{1em}
    \setlength{\labelwidth}{1em}
    \setlength{\labelsep}{0.5em} } }
\newcommand{\squishend}{\end{list}  }

\newcommand{\squishenum}{
\begin{list}{$\bullet$}{ 
    \setlength{\itemsep}{1pt}
    \setlength{\parsep}{0pt}
    \setlength{\topsep}{1.5pt}
    \setlength{\partopsep}{0pt}
    \setlength{\leftmargin}{2em}
    \setlength{\labelwidth}{1.5em}
    \setlength{\labelsep}{0.5em} } }
    
\newcommand{\setf}[1]{{\bf{#1}}}
\newcommand{\scope}[1]{\mathbf{x}^{#1}}

\newcommand{\argmin}{\operatornamewithlimits{argmin}}

\newtheorem{theorem}{{\bf Theorem}}

\newtheorem{definition}{{\bf Definition}}

\newtheorem{example}{{\bf Example}}


\setlength{\floatsep}{0.50em}            	     
\setlength{\textfloatsep}{1.00em}            




\newboolean{includeMemo}
\setboolean{includeMemo}{true} 

\newcommand{\memoside}[1]{\ifthenelse{\boolean{includeMemo}}{\todo[caption={},color=green!20!]{{\footnotesize #1}}}}
\newcommand{\memo}[1]{\ifthenelse{\boolean{includeMemo}}{\todo[inline,caption={},color=green!20!]{#1}}}
\newcommand{\highlightbox}[1]{\ifthenelse{\boolean{includeMemo}}{\todo[inline,caption={},color=gray!20!]{#1}}}

\newcommand{\xhdr}[1]{\vspace{5pt}\noindent\textbf{#1 }}
\newcommand{\ignore}[1]{}

\usepackage{ulem}
\setlength{\ULdepth}{1.8pt}

\newcommand{\var}[1]{{\textit{var}(#1)}}


\usepackage{url}

\usepackage{jair, theapa, rawfonts}

\ShortHeadings{Explainable Distributed Constraint Optimization Problems}
{Rachmut, Vasileiou, Weinstein, Zivan \& Yeoh}
\firstpageno{1}

\begin{document}

\title{Explainable Distributed Constraint Optimization Problems}

\author{\name Ben Rachmut \email benr@wustl.edu\\
\addr Department of Computer Science and Engineering\\
Washington University in St. Louis
\AND
\name Stylianos Loukas Vasileiou \email v.stylianos@wustl.edu \\
\addr Department of Computer Science and Engineering\\
Washington University in St. Louis
\AND
\name Nimrod Meir Weinstein \email nimrodme@post.bgu.ac.il \\
\addr  Department of Industrial Engineering and Management\\
Ben-Gurion University of the Negev 
\AND
\name Roie Zivan \email zivanr@bgu.ac.il \\
\addr  Department of Industrial Engineering and Management\\
Ben-Gurion University of the Negev 
\AND
\name William Yeoh \email wyeoh@wustl.edu \\
\addr Department of Computer Science and Engineering\\
Washington University in St. Louis
      }


\maketitle

\begin{abstract}
The Distributed Constraint Optimization Problem (DCOP) formulation is a powerful tool to model cooperative multi-agent problems that need to be solved distributively. A core assumption of existing approaches is that DCOP solutions can be easily understood, accepted, and adopted, which may not hold, as evidenced by the large body of literature on Explainable AI. In this paper, we propose the Explainable DCOP (X-DCOP) model, which extends a DCOP to include its solution and a contrastive query for that solution. We formally define some key properties that contrastive explanations must satisfy for them to be considered as valid solutions to X-DCOPs as well as theoretical results on the existence of such valid explanations. To solve X-DCOPs, we propose a distributed framework as well as several optimizations and suboptimal variants to find valid explanations. We also include a human user study that showed that users, not surprisingly, prefer shorter explanations over longer ones. Our empirical evaluations showed that our approach can scale to large problems, and the different variants provide different options for trading off explanation lengths for smaller runtimes. Thus, our model and algorithmic contributions extend the state of the art by reducing the barrier for users to understand DCOP solutions, facilitating their adoption in more real-world applications. 

\end{abstract}

\section{Introduction}


The \emph{Distributed Constraint Optimization Problem} (DCOP)~\cite{modi:05,petcu:05,fioretto:18} formulation is a powerful tool to model cooperative multi-agent optimization problems. DCOPs are used for modeling many problems that are distributed by nature and where agents need to coordinate their decisions (represented by value assignments) to minimize the aggregated constraint costs. This model is widely employed 
for representing distributed problems such as meeting scheduling~\cite{maheswaran:04a}, wireless sensor networks~\cite{yeoh:10}, multi-robot teams coordination~\cite{zivan:15,pertzovskiy2023cams}, smart grids~\cite{miller:12}, smart homes~\cite{fioretto:17a,rust2022resilient}, and large-scale satellite constellations~\cite{ZilbersteinRSC24}.

One of the core assumptions of the existing approaches is that DCOP solutions can be easily understood, accepted, and adopted in these various applications. Unfortunately, this assumption is often invalid and there is often a strong need for AI systems to explain their recommendations or solutions to human users, as evidenced by the large body of literature on Explainable AI (XAI)~\cite{gunning2019xai,Miller2019,ttathai2020,byrne2023good}. This need is even more important in applications where DCOP agents represent human users who are affected by the choice of value assignments in the DCOP solution. For example, in a meeting scheduling problem, it is important for human users to be able to query their agents to understand why their meetings are scheduled at particular times. 

To address this limitation, in this paper, we propose the \emph{Explainable DCOP} (X-DCOP) model, which extends a DCOP to include its solution, and a query regarding that solution. Motivated by insights from the broader XAI literature, we consider \emph{contrastive queries} due to their relevance in decision-making scenarios~\cite{krarup2021contrastive,vasileioua2023lasp,zehtabi2024contrastive}. Contrastive queries are of the form ``Why is solution A chosen over an alternative solution B?'' or, in the case of X-DCOPs, ``Why is variable $x_1$ assigned value $d_1$ instead of an alternative value $d_2$?''. A solution to an X-DCOP is a \emph{contrastive explanation} that provides counterfactual reasoning on the impact of assigning the alternative value instead of the original value (i.e., the increase in constraint cost and the source of this increase), justifying why the original value is better than the alternative. 

To solve X-DCOPs, we propose a distributed framework called CEDAR as well as several optimizations and suboptimal variants to compute the explanations. We also include a human user study to evaluate the user understanding and satisfaction of the explanations; the study shows that users, not surprisingly, prefer shorter explanations over longer ones. Finally, our empirical computational results on random graphs and meeting scheduling benchmarks show that CEDAR can scale to large X-DCOP problems with 50 agents, and the different versions of CEDAR trade off explanation lengths for smaller runtimes. 


\section{Background: DCOPs}

A \emph{Distributed Constraint Optimization Problem} (DCOP)~\cite{modi:05,petcu:05,fioretto:18} is a tuple  
$\langle \setf{A}, \setf{X}, \setf{D}, \setf{F}, \alpha \rangle$, where: 
\squishlist
\item $\setf{A} = \{a_i\}_{i=1}^p$ is a set of \emph{agents}. 
\item $\setf{X} = \{x_i\}_{i=1}^n$ is a set of \emph{variables}. 
\item $\setf{D} = \{D_x\}_{x \in \setf{X}}$ is a set of finite \emph{domains} and each variable $x \in \setf{X}$ can be assigned values from the set $D_{x}$. 
\item $\setf{F} = \{f_i\}_{i=1}^m$ is a set of \emph{constraints}, each defined over a set of variables: $ f_i : \prod_{x \in \scope{f_i}} D_x \to \mathbb{R} \cup \{\infty\}$, where infeasible configurations have $\infty$ costs and $\scope{f_i} \subseteq \setf{X}$ is the \emph{scope} of $f_i$. 
\item $\alpha : \setf{X} \to \setf{A}$ is a \emph{mapping function} that associates each variable to one agent. 
\squishend

A \emph{solution} $\sigma$ is a value assignment for a set of variables that is consistent with their respective domains. We use the term $\var{\sigma}$ to denote the set of variables whose values are assigned in $\sigma$. 
A \emph{grounded constraint} $f_{\downarrow \sigma}$ is the constraint $f$ grounded for the value assignments in the solution $\sigma$: 
$f_{\downarrow \sigma}: \prod_{x \in \setf{x}^{f}, \setf{x}^f \subseteq \var{\sigma}} \sigma(x) \to \mathbb{R} \cup \{\infty\}$, where $\sigma(x)$ is the value assigned to variable $x$ in the solution $\sigma$.
The cost $\setf{F}(\sigma) = \sum_{f \in \setf{F}} f_{\downarrow \sigma}(\sigma)$ is the sum of the costs across all the grounded constraints in $\sigma$. A solution $\sigma$ is \emph{complete} if $\var{\sigma} = \setf{X}$. Typically, the goal is to find an optimal complete solution $\sigma^* = \argmin_{\sigma, \var{\sigma} = \setf{X}} \setf{F}(\sigma)$.

\begin{figure*}[t]
\begin{center}
  \begin{minipage}[h]{30mm}
  \centering
 \includegraphics[height=0.85in]{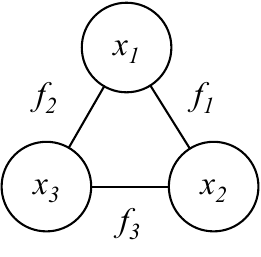}
  \end{minipage}
  \begin{minipage}[h]{23mm}
    \centering
    \begin{tabular}{|c|c||c|}
      \hline
      $x_1$ & $x_2$ & $f_1$ \\
      \hline
      \hline
      0 & 0 & 1 \\
      0 & 1 & 2 \\
      1 & 0 & 1 \\
      1 & 1 & 1 \\
      \hline
    \end{tabular}
  \end{minipage}
  \hspace{0.8em}
  \begin{minipage}[h]{23mm}
    \centering
    \begin{tabular}{|c|c||c|}
      \hline
      $x_1$ & $x_3$ & $f_2$ \\
      \hline
      \hline
      0 & 0 & 3 \\
      0 & 1 & 3 \\
      1 & 0 & 1 \\
      1 & 1 & 1 \\
      \hline
    \end{tabular}
  \end{minipage}
  \hspace{0.1in}
  \begin{minipage}[h]{23mm}
    \centering
    \begin{tabular}{|c|c||c|}
      \hline
      $x_2$ & $x_3$ & $f_3$ \\
      \hline
      \hline
      0 & 0 & 3 \\
      0 & 1 & 4 \\
      1 & 0 & 1 \\
      1 & 1 & 2 \\
      \hline
    \end{tabular}
 \end{minipage} 
\end{center}
\vspace{-1em}
\caption{Example DCOP. \label{fig:dcop}}
\end{figure*}

\begin{example}[DCOP]
\label{example1_dcop}
Figure~\ref{fig:dcop} illustrates a DCOP that we will use as a running example in this paper. It has three variables $x_1$, $x_2$, and $x_3$, where each variable $x_i$ is mapped to an agent $a_i$. All three variables have identical domains $\{0, 1\}$ for the two values they can take. There are three constraints: $f_1$ between variables $x_1$ and $x_2$, $f_2$ between variables $x_1$ and $x_3$, and $f_3$ between variables $x_2$ and $x_3$. The three tables in the figure tabulate the costs of the constraints. Each row of a table corresponds to a grounded constraint for the particular combination of value assignments to the variables in the scope of the constraint. The optimal complete solution is for both variables $x_1$ and $x_2$ to be assigned value 1 and for variable $x_3$ to be assigned value 0, incurring a total cost of 3, 1 from each constraint.
\end{example}

\xhdr{$k$-optimality:} We now briefly describe $k$-optimality as this concept is later used in the paper. A complete DCOP solution is $k$-optimal if no change in the value assignments of $k$ or fewer variables will improve the solution. $k$-optimal algorithms are algorithms that are guaranteed to find a $k$-optimal solution. Researchers have introduced a number of $k$-optimal algorithms as well as theoretical guarantees on the worst-case solution quality and upper bounds on the number of $k$-optimal solutions in a DCOP~\cite{pearce:07,BowringPPJT08,maheswaran:04b}.

\section{Explainable DCOP}

The \emph{Explainable DCOP} (X-DCOP) model extends the standard DCOP model to allow a user to ask an agent a contrastive query for a given solution of the DCOP. Specifically, an X-DCOP is defined by a tuple $\langle \mathcal{P}, \sigma, \mathcal{Q} \rangle$, where:
\squishlist
\item $\mathcal{P}$ is a DCOP instance.
\item $\sigma$ is a \emph{complete solution} to the DCOP $\mathcal{P}$.
\item $\mathcal{Q} = \langle a_Q, \sigma_Q, \hat{\sigma}_Q \rangle$ is a \emph{query} defined by a tuple, where $a_Q$ is the agent that is asked the query; $\sigma_Q \subseteq \sigma$ is the \emph{value assignment in the solution} $\sigma$ for a subset of variables $\var{\sigma_Q} \subseteq \var{\sigma}$; and $\hat{\sigma}_Q$ is an \emph{alternative value assignment} for that same subset of variables $\var{\sigma_Q} = \var{\hat{\sigma}_Q}$. In other words, for each variable $x \in \var{\sigma_Q}$, $\sigma_Q(x) \neq \hat{\sigma}_Q(x)$.
\squishend

\begin{example}[X-DCOP]
\label{example1}
Using the example DCOP from Figure~\ref{fig:dcop}, recall that $x_1$ is assigned value 1 in the optimal solution. A user may ask agent $a_1$ the following contrastive query: ``Why is $x_1$ assigned value $1$ in the solution instead of the alternative value~$0$?'' 
\end{example}

An \emph{X-DCOP solution} is a \emph{contrastive explanation} defined below. In essence, the explanation provides the grounded constraints that are affected by the change in value assignments in the query and the changes in costs if the alternative solution is adopted.

\begin{definition}[Contrastive Explanation]
\label{def:solution}
A \emph{contrastive explanation} for an X-DCOP $\langle \mathcal{P}, \sigma, \mathcal{Q} \rangle$ is defined by the tuple $\langle \setf{F}_{\downarrow \sigma_Q}, \setf{F}_{\downarrow \hat{\sigma}_Q}, \setf{F}_{\downarrow \sigma_Q}(\sigma_Q), \setf{F}_{\downarrow \hat{\sigma}_Q}(\hat{\sigma}_Q) \rangle$, where:
\squishlist
\item $\setf{F}_{\downarrow \sigma_Q}$ 
is the set of \emph{all} grounded constraints from the DCOP $\mathcal{P}$ whose scope includes a variable in the solution $\sigma_Q$ from the query $\mathcal{Q}$ and are grounded by the complete solution $\sigma$;
\item $\setf{F}_{\downarrow \hat{\sigma}_Q}$ 
is a \emph{possibly partial} set of grounded constraints from the DCOP $\mathcal{P}$ whose scope includes a variable in the solution $\hat{\sigma}_Q$ from the query $\mathcal{Q}$ and are grounded by the complete solution $(\sigma \setminus \sigma_Q) \cup \hat{\sigma}_Q$.
\item $\setf{F}_{\downarrow \sigma_Q}(\sigma_Q) = \sum_{f \in \setf{F}_{\downarrow \sigma_Q}} f(\sigma)$ is the cost of the solution $\sigma_Q$ from the query $\mathcal{Q}$ evaluated through the grounded constraints in $\setf{F}_{\downarrow \sigma_Q}$.
\item $\setf{F}_{\downarrow \hat{\sigma}_Q}(\hat{\sigma}_Q) = \sum_{f \in \setf{F}_{\downarrow \hat{\sigma}_Q}} f((\sigma \setminus \sigma_Q) \cup \hat{\sigma}_Q)$ is the cost of the alternative solution $\hat{\sigma}_Q$ from the query $\mathcal{Q}$ evaluated through the grounded constraints in $\setf{F}_{\downarrow \hat{\sigma}_Q}$.
\squishend
\end{definition}

\begin{example}[Contrastive Explanation]
\label{example2}
Continuing with Example~\ref{example1}, a possible contrastive explanation for the query is the following: ``Assigning $x_1$ its original value 1 will incur a cost of 2: 1 from constraint $f_1$ with $x_2$ and 1 from constraint $f_2$ with $x_3$. In contrast, assigning it the alternative value 0 will incur a \emph{larger} cost of 5: 2 from constraint $f_1$ with $x_2$ and 3 from constraint $f_2$ with $x_3$.'' Note that constraint $f_3$ is ignored in the explanation since the variables in its scope are not variables in the query. Thus, the cost of that constraint is identical for both the original and alternative value assignments $\sigma_Q$ and $\hat{\sigma}_Q$, respectively.
\end{example}

The definition of contrastive explanations for X-DCOPs allows some flexibility in the set of grounded constraints $\setf{F}_{\downarrow \hat{\sigma}_Q}$ as it allows some of the grounded constraints to be omitted from the set. As a result, it is possible to have a ``null'' explanation, in the cases where the set $\setf{F}_{\downarrow \hat{\sigma}_Q}$ is empty. Naturally, such an explanation is not particularly interesting because it is neither informative nor satisfying for users. Therefore, we propose that an explanation is \emph{valid} if it satisfies some core properties that provide the minimal amount of information for the explanation to be useful.


\begin{definition}[Validity]\label{def:validity}
A contrastive explanation 
$\langle \setf{F}_{\downarrow \sigma_Q}, \setf{F}_{\downarrow \hat{\sigma}_Q}, \setf{F}_{\downarrow \sigma_Q}(\sigma_Q), \setf{F}_{\downarrow \hat{\sigma}_Q}(\hat{\sigma}_Q) \rangle$
is \emph{valid} iff $\setf{F}_{\downarrow \sigma_Q}(\sigma_Q) \leq \setf{F}_{\downarrow \hat{\sigma}_Q}(\hat{\sigma}_Q)$.
\end{definition}

Given this definition of validity, it is interesting to observe that it is not guaranteed that valid explanations exist for all X-DCOPs. For example, if the complete solution $\sigma$ in the X-DCOP is a random value assignment to all variables, then it may be the case that having some variables assigned values from the alternative assignment $\hat{\sigma}_Q$ will result in a smaller cost and is thus a better solution, violating the second validity property. Therefore, the existence of valid explanations is strongly based on the properties of the complete solution $\sigma$ and the query $\mathcal{Q}$ in the X-DCOP. We show in the following that valid explanations always exist for a specific case.

\begin{theorem}
For an X-DCOP $\langle \mathcal{P}, \sigma, \mathcal{Q} \rangle$, if $\sigma$ is a $k$-optimal solution and the number of variables in the query is not greater than $k$, that is, $|\var{\sigma_Q}| = |\var{\hat{\sigma}_Q}| \leq k$, then a valid explanation exists. 
\label{theorem:existence}
\end{theorem}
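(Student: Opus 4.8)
The plan is to exhibit a concrete valid explanation and then read the validity inequality off of $k$-optimality. The one degree of freedom we have is the set $\setf{F}_{\downarrow \hat{\sigma}_Q}$, which Definition~\ref{def:solution} allows to be partial; for the existence claim we will simply take it to be the \emph{full} set of grounded constraints touching the queried variables.

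First I would build the complete solution $\sigma' := (\sigma \setminus \sigma_Q) \cup \hat{\sigma}_Q$, i.e.\ $\sigma$ with the queried variables switched to their alternative values. Since $\var{\sigma_Q} = \var{\hat{\sigma}_Q}$ and $\sigma_Q(x) \neq \hat{\sigma}_Q(x)$ for every $x \in \var{\sigma_Q}$, the solutions $\sigma$ and $\sigma'$ agree on all variables outside $\var{\sigma_Q}$ and differ on at most $|\var{\sigma_Q}| \leq k$ variables. Hence $\sigma'$ is among the solutions quantified over in the definition of $k$-optimality, and $k$-optimality of $\sigma$ yields $\setf{F}(\sigma) \leq \setf{F}(\sigma')$.

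Next I would localise this inequality to the constraints that actually matter. Let $\setf{F}_Q := \{\, f \in \setf{F} : \scope{f} \cap \var{\sigma_Q} \neq \emptyset \,\}$. For any $f \notin \setf{F}_Q$, the scope of $f$ lies entirely among variables that $\sigma$ and $\sigma'$ assign identically, so $f_{\downarrow \sigma}(\sigma) = f_{\downarrow \sigma'}(\sigma')$; these terms are common to $\setf{F}(\sigma)$ and $\setf{F}(\sigma')$ and cancel, leaving $\sum_{f \in \setf{F}_Q} f_{\downarrow \sigma}(\sigma) \leq \sum_{f \in \setf{F}_Q} f_{\downarrow \sigma'}(\sigma')$. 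Because $\var{\hat{\sigma}_Q} = \var{\sigma_Q}$, the set $\setf{F}_Q$ is exactly the family of constraints underlying $\setf{F}_{\downarrow \sigma_Q}$, and also underlies the full version of $\setf{F}_{\downarrow \hat{\sigma}_Q}$. Taking $\setf{F}_{\downarrow \hat{\sigma}_Q}$ to be that full set, the left-hand side above is precisely $\setf{F}_{\downarrow \sigma_Q}(\sigma_Q)$ and the right-hand side is precisely $\setf{F}_{\downarrow \hat{\sigma}_Q}(\hat{\sigma}_Q)$, so the tuple $\langle \setf{F}_{\downarrow \sigma_Q}, \setf{F}_{\downarrow \hat{\sigma}_Q}, \setf{F}_{\downarrow \sigma_Q}(\sigma_Q), \setf{F}_{\downarrow \hat{\sigma}_Q}(\hat{\sigma}_Q) \rangle$ meets Definition~\ref{def:validity} and is a valid explanation.

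I do not expect a genuine obstacle; the only subtlety worth spelling out is the bookkeeping of which grounded constraints are charged to $\sigma_Q$ versus $\hat{\sigma}_Q$, and a minor corner case: if the cancelled common sum is $\infty$ (so $\sigma$ is itself infeasible) the cancellation step is not literally valid, and one either assumes $\setf{F}(\sigma) < \infty$ or notes that $k$-optimality then forces every solution within $k$ changes, including $\sigma'$, to be infeasible as well, so the inequality still holds in the extended reals. The conceptual heart of the argument is simply that reassigning only the queried variables produces a solution within $k$ value changes of $\sigma$, which is exactly the regime in which $k$-optimality gives information.
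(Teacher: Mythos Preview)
Your proposal is correct and follows essentially the same approach as the paper: both take $\setf{F}_{\downarrow \hat{\sigma}_Q}$ to be the full set of grounded constraints touching the queried variables and read the validity inequality off of $k$-optimality. Your version is in fact more detailed than the paper's sketch, explicitly carrying out the cancellation of constraints outside $\setf{F}_Q$ to pass from $\setf{F}(\sigma) \leq \setf{F}(\sigma')$ to $\setf{F}_{\downarrow \sigma_Q}(\sigma_Q) \leq \setf{F}_{\downarrow \hat{\sigma}_Q}(\hat{\sigma}_Q)$, and flagging the infinite-cost corner case that the paper does not mention.
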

\begin{proof} (Sketch)
Let $\setf{F}_{\downarrow \sigma_Q}$ and $\setf{F}_{\downarrow \hat{\sigma}_Q}$ denote the sets of \emph{all} grounded constraints whose scope includes a variable in the solutions $\sigma_Q$ and $\hat{\sigma}_Q$, respectively, and are grounded by the complete solutions $\sigma$ and $(\sigma \setminus \sigma_Q) \cup \hat{\sigma}_Q$, respectively.
Assume that $\sigma$ is a $k$-optimal solution based on the premise of the lemma. Then, by definition, it is guaranteed that no change in value assignments to any subset of $k'$ variables, with $k' \leq k$, will result in a better solution (i.e., one with a smaller cost)~\cite{pearce:07}. Consequently, for any query $\langle a_Q, \sigma_Q, \hat{\sigma}_Q \rangle$ where $|\var{\sigma_Q}| = |\var{\hat{\sigma}_Q}| \leq k$, it must be the case that $\setf{F}_{\downarrow \sigma_Q}(\sigma_Q) \leq \setf{F}_{\downarrow \hat{\sigma}_Q}(\hat{\sigma}_Q)$.
\end{proof}

Finally, while the proof above assumes that $\setf{F}_{\downarrow \hat{\sigma}_Q}$ is the set of \emph{all} grounded constraints whose scope includes a variable in the solutions $\hat{\sigma}_Q$ to prove the existence of a valid explanation, it may often be the case that explanations with only a partial subset of constraints are also valid, illustrated below.

\begin{example}[Alternative Contrastive Explanation]
\label{example3}
Continuing from Example~\ref{example2}, an alternative contrastive explanation is the following: ``Assigning $x_1$ its original value 1 will incur a cost of 2: 1 from constraint $f_1$ with $x_2$ and 1 from constraint $f_2$ with $x_3$. In contrast, assigning it the alternative value 0 will incur a \emph{larger} cost of \emph{at least 3} from constraint $f_2$ with $x_3$.'' Note that this explanation omits constraint $f_1$ with $x_2$, which is unnecessary since the cost of $f_2$ with $x_3$ alone is already greater than the cost in the original solution $\sigma$.
\end{example}


\section{X-DCOP Framework: CEDAR}\label{sec:framework}

\begin{algorithm}[!t]
\DontPrintSemicolon
\KwIn{$\sigma$, $\mathcal{Q} = \langle a_Q$, $\sigma_Q$, $\hat{\sigma}_Q \rangle$}
\KwResult{$\langle \setf{F}_{\downarrow \sigma_Q}, \setf{F}_{\downarrow \hat{\sigma}_Q}, \setf{F}_{\downarrow \sigma_Q}(\sigma_Q), \setf{F}_{\downarrow \hat{\sigma}_Q}(\hat{\sigma}_Q) \rangle$}

$\setf{F}^{a_Q}_{\downarrow \sigma_Q} \gets$ getOwnGroundedConstraints($\sigma_Q$) \\ 
$\setf{F}^{a_Q}_{\downarrow \hat{\sigma}_Q} \gets$ getOwnGroundedConstraints($\hat{\sigma}_Q$) \\ 
$\setf{F}^{\lnot a_Q}_{\downarrow \sigma_Q} \gets \setf{F}^{\lnot a_Q}_{\downarrow \hat{\sigma}_Q} \gets \emptyset$ \\
\ForEach{$a_i$ such that $\exists x: \alpha(x) = a_i, x \in \var{\sigma_Q}$}{
    Send REQUEST($a_Q$, $\sigma_Q$) to agent $a_i$ \\
    Send REQUEST($a_Q$, $\hat{\sigma}_Q$) to agent $a_i$ \\
}
Wait until all REPLY messages are received \\
$\setf{F}_{\downarrow \sigma_Q} \gets \setf{F}^{a_Q}_{\downarrow \sigma_Q} \cup \setf{F}^{\lnot a_Q}_{\downarrow \sigma_Q}$ \\
$\setf{F}_{\downarrow \hat{\sigma}_Q} \gets \setf{F}^{a_Q}_{\downarrow \hat{\sigma}_Q} \cup \setf{F}^{\lnot a_Q}_{\downarrow \hat{\sigma}_Q}$ \\
$\setf{F}_{\downarrow \sigma_Q}(\sigma_Q) \gets \sum_{f \in \setf{F}_{\downarrow \sigma_Q}} f(\sigma)$ \\
$\setf{F}_{\downarrow \hat{\sigma}_Q}(\hat{\sigma}_Q) \gets \sum_{f \in \setf{F}_{\downarrow \hat{\sigma}_Q}} f((\sigma \setminus \sigma_Q) \cup \hat{\sigma}_Q)$ \\
\Return $\langle \setf{F}_{\downarrow \sigma_Q}, \setf{F}_{\downarrow \hat{\sigma}_Q}, \setf{F}_{\downarrow \sigma_Q}(\sigma_Q), \setf{F}_{\downarrow \hat{\sigma}_Q}(\hat{\sigma}_Q) \rangle$

\caption{CEDAR}
\label{alg}
\end{algorithm}

\begin{procedure}[!t]
\DontPrintSemicolon

$\setf{F}^{a_i}_{\downarrow \bar{\sigma}_Q} \gets$ getOwnGroundedConstraints($\bar{\sigma}_Q$) \\
Send REPLY($a_i$, $\bar{\sigma}_Q$, $\setf{F}^{a_i}_{\downarrow \bar{\sigma}_Q}$) to sender agent $a_Q$

\caption{When Receive REQUEST($a_Q$, $\bar{\sigma}_Q$)}
\end{procedure}

\begin{procedure}[!t]
\DontPrintSemicolon
\uIf{$\bar{\sigma}_Q = \sigma_Q$}{
$\setf{F}^{\lnot a_Q}_{\downarrow \sigma_Q} \gets \setf{F}^{\lnot a_Q}_{\downarrow \sigma_Q} \cup \setf{F}^{a_i}_{\downarrow \bar{\sigma}_Q}$
}
\Else{
$\setf{F}^{\lnot a_Q}_{\downarrow \hat{\sigma}_Q} \gets \setf{F}^{\lnot a_Q}_{\downarrow \hat{\sigma}_Q} \cup \setf{F}^{a_i}_{\downarrow \bar{\sigma}_Q}$
}

\caption{When Receive REPLY($a_i$, $\bar{\sigma}_Q$, $\setf{F}^{a_i}_{\downarrow \bar{\sigma}_Q}$)}
\end{procedure}



We now describe \emph{Constraint-based Explanation via Distributed Algorithms and Reasoning} (CEDAR), our framework for solving X-DCOPs. We first start with a fairly naive and straightforward framework (pseudocode presented in Algorithm~\ref{alg}), before describing optimizations and variants that we can also consider. At a high level, when an agent $a_Q$ receives a query $\mathcal{Q} = \langle a_Q$, $\sigma_Q$, $\hat{\sigma}_Q \rangle$, it first identifies all of its own grounded constraints that are relevant to the query through the getOwnGroundedConstraints($\cdot$) function for both the original and alternative solutions $\sigma_Q$ and $\hat{\sigma}_Q$, respectively. Those grounded constraints are stored in their respective sets $\setf{F}^{a_Q}_{\downarrow \sigma_Q}$ and $\setf{F}^{a_Q}_{\downarrow \hat{\sigma}_Q}$ (Lines~1-2). Specifically, the getOwnGroundedConstraints($\cdot$) function accepts a solution $\bar{\sigma}_Q$ (that is either the original or alternative solution) as its argument and returns exactly all the constraints whose scope includes a variable $x$ associated with the agent $a_Q$ (i.e.,~$\alpha(x) = a_Q$) and is also a variable in the query (i.e.,~$x \in \var{\bar{\sigma}_Q}$) grounded with $(\sigma \setminus \sigma_{Q}) \cup \bar{\sigma}_Q$. 

Next, it identifies all of the other grounded constraints that it is unaware of (because the scope of those constraints do not include any of its variables). To do so, it sends a REQUEST message to each agent whose variables are in the query (Lines~4-6). When an agent receives such a REQUEST message, it also identifies all its own grounded constraints for the query and returns them to the sender (Lines~13-14).

Once agent $a_Q$ receives all the grounded constraints from all the other agents, it stores all those constraints in the appropriate sets $\setf{F}^{\lnot a_Q}_{\downarrow \sigma_Q}$ or $\setf{F}^{\lnot a_Q}_{\downarrow \hat{\sigma}_Q}$ (Lines~15-18), which are then combined with its own grounded constraints in the respective sets $\setf{F}_{\downarrow \sigma_Q}$ and $\setf{F}_{\downarrow \sigma_Q}$ (Lines~8-9). These sets then contain all grounded constraints whose scope includes a variable in the query. Finally, it evaluates the costs $\setf{F}_{\downarrow \sigma_Q}(\sigma_Q)$ and $\setf{F}_{\downarrow \hat{\sigma}_Q}(\hat{\sigma}_Q)$ of the solution $\sigma_Q$ and the alternative $\hat{\sigma}_Q$, respectively, and returns them (Lines~10-12).

CEDAR is guaranteed to terminate and will return a valid explanation if one exists (see Theorems~\ref{th:completeness} and~\ref{th:correctness}).



\subsection{Optimizations}
\label{sec:XDCOP-opt}
We now describe two optimizations O1 and O2 that can be applied within CEDAR to improve it. 



\xhdr{Optimization O1 (Returning Shortest Explanations):} For an explanation to be valid, it is required that $\setf{F}_{\downarrow \hat{\sigma}_Q}(\hat{\sigma}_Q) \geq \setf{F}_{\downarrow \sigma_Q}(\sigma_Q)$. However, as illustrated in Example~\ref{example3}, it can be the case that there exist \emph{smaller} subsets of grounded constraints $\setf{F}'_{\downarrow \hat{\sigma}_Q} \subseteq \setf{F}_{\downarrow \hat{\sigma}_Q}$ whose cost $\setf{F}'_{\downarrow \hat{\sigma}_Q}(\hat{\sigma}_Q) \geq \setf{F}_{\downarrow \sigma_Q}(\sigma_Q)$ is also greater than or equal to the cost of the original solution. Motivated by our user study, which shows that human users prefer shorter explanations (see Section~\ref{sec:user-study}), our first optimization seeks to find a minimal subset of grounded constraints $\setf{F}^*_{\downarrow \hat{\sigma}_Q}$ for the alternative solution $\hat{\sigma}_Q$ that forms a valid explanation.

To compose this minimal subset, CEDAR sorts the grounded constraints in $\setf{F}_{\downarrow \hat{\sigma}_Q}$ in decreasing cost order. Then, the minimal subset $\setf{F}^*_{\downarrow \hat{\sigma}_Q}$ is constructed by incrementally adding the remaining grounded constraint with the highest cost to the set until either (1)~$\setf{F}^*_{\downarrow \hat{\sigma}_Q}(\hat{\sigma}_Q) \geq \setf{F}_{\downarrow \sigma_Q}(\sigma_Q)$, in which case a valid explanation is found; or (2)~$\setf{F}^*_{\downarrow \hat{\sigma}_Q}(\hat{\sigma}_Q) < \setf{F}_{\downarrow \sigma_Q}(\sigma_Q)$ even after adding all the grounded constraints in $\setf{F}_{\downarrow \hat{\sigma}_Q}$, in which case a valid explanation does not exist. 

\xhdr{Optimization O2 (Decentralized Parallel Sorting):} We exploit the fact that the grounded constraints can be sorted in parallel by the different agents before the agents send their grounded constraints to $a_Q$. Once agent $a_Q$ receives the sorted constraints from each agent, it identifies the constraint with the highest cost from among the heads of the sorted lists. To efficiently do this, it utilizes a priority heap that stores all the heads. Once that constraint is identified, the agent $a_Q$ removes that constraint and adds it to the set $\setf{F}^*_Q$. It repeats this process until either (1)~the sum of the costs of the grounded constraints $\setf{F}^*_{\downarrow \hat{\sigma}_Q}(\hat{\sigma}_Q) \geq \setf{F}_{\downarrow \sigma_Q}(\sigma_Q)$, at which point it found a valid explanation; or (2)~$\setf{F}^*_{\downarrow \hat{\sigma}_Q}(\hat{\sigma}_Q) < \setf{F}_{\downarrow \sigma_Q}(\sigma_Q)$ even after adding all the grounded constraints of all relevant agents, in which case a valid explanation does not exist. 


This sorting process will take $O(|\mathcal{F}_a| \log(|\mathcal{F}_a|))$ for each agent $a$, where $\mathcal{F}_a$ is the set of grounded constraints of the agent relevant to the query. Building the initial priority heap will take $O(|\mathcal{A}_Q|)$, where $\mathcal{A}_Q$ is the set of agents whose variables are in the query. Identifying the constraint with the highest cost is then just $O(1)$, but inserting the next head of the list into the priority heap will take $O(\log(|\mathcal{A}_Q|))$. Since we will have at most $O(|\mathcal{F}_a| \cdot |\mathcal{A}_Q|)$ insertions into the priority heap, the overall runtime is thus $O(|\mathcal{F}_a| \log(|\mathcal{F}_a|))$ + $O(|\mathcal{F}_a| \cdot |\mathcal{A}_Q| \cdot \log(|\mathcal{A}_Q|))$. Assuming that $\log(|\mathcal{F}_a|) < |\mathcal{A}_Q| \cdot \log(|\mathcal{A}_Q|)$, the runtime of this optimization can be simplified to $O(|\mathcal{F}_a| \cdot |\mathcal{A}_Q| \cdot \log(|\mathcal{A}_Q|))$, which is smaller than the runtime for the centralized approach, which is $O(|\mathcal{F}_a| \cdot |\mathcal{A}_Q| \cdot \log(|\mathcal{F}_a| \cdot |\mathcal{A}_Q|))$.



\subsection{Suboptimal Variants}

We now describe two subvariants V1 and V2 for CEDAR that trade off explanation lengths for shorter runtimes. 

\xhdr{Suboptimal Variant V1:} The runtime of optimization O2 is dominated by the priority heap operations to identify the grounded constraint with the highest cost. In this variant, we speed up this process at the cost of losing the guarantee that the constraint with the highest cost will be returned. Specifically, agent $a_Q$ merges the sorted lists of each of the other agents into a single pseudo-sorted list; the first set of $|\mathcal{A}_Q|$ elements of the pseudo-sorted list are the first elements from each of the $\mathcal{A}_Q$ sorted lists of the other agents; the second set of $|\mathcal{A}_Q|$ elements are the second elements from each of the sorted lists; and so on. The agent $a_Q$ iteratively adds the grounded constraints from the pseudo-sorted list into a set $\setf{F}'_{\downarrow \hat{\sigma}_Q}$ until either (1)~$\setf{F}'_{\downarrow \hat{\sigma}_Q}(\hat{\sigma}_Q) \geq \setf{F}_{\downarrow \sigma_Q}(\sigma_Q)$, in which case a valid explanation is found; or (2)~$\setf{F}'_{\downarrow \hat{\sigma}_Q}(\hat{\sigma}_Q) < \setf{F}_{\downarrow \sigma_Q}(\sigma_Q)$ even after adding all grounded constraints from the pseudo-sorted list, in which case a valid explanation does not exist. 


\xhdr{Suboptimal Variant V2:} While optimizations O1 and O2 aim to find shortest explanations, they can come at a high runtime and communication cost as \emph{all the relevant grounded constraints} in $\setf{F}_{\downarrow \hat{\sigma}_Q}$ for the alternative solution $\hat{\sigma}_Q$ need to be considered and communicated to agent $a_Q$. In this variant, we consider an anytime extension whereby agent $a_Q$ sends REQUEST messages to only a subset of agents to request for their relevant grounded constraints with the hope that the grounded constraints received, together with its own relevant grounded constraints, are sufficient to form a valid explanation. If the grounded constraints received are insufficient, additional REQUEST messages are sent to other agents. This process repeats until either a valid explanation can be formed or grounded constraints are received from all agents. Although this approach may return a valid explanation faster and with fewer messages compared CEDAR with Optimizations~1 and~2, note that it is no longer guaranteed to return explanations that are shortest.

To identify which subset of agents to send REQUEST messages to, agent $a_Q$ first includes all its own grounded constraints, which are relevant for the alternative solution $\hat{\sigma}_Q$ in the set $\setf{F}'_{\downarrow \hat{\sigma}_Q}$ and computes the cost $\setf{F}'_{\downarrow \hat{\sigma}_Q}(\hat{\sigma}_Q)$ of the alternative solution $\hat{\sigma}_Q$ with those constraints. If $\setf{F}'_{\downarrow \hat{\sigma}_Q}(\hat{\sigma}_Q) \geq \setf{F}_{\downarrow \sigma_Q}(\sigma_Q)$, then a valid explanation has been found, including only the constraints of agent $a_Q$. Otherwise, the agent must identify relevant grounded constraints from other agents to add to the set $\setf{F}'_{\downarrow \hat{\sigma}_Q}$ so that the inequality is met. 

The difference in costs $\Delta = \setf{F}_{\downarrow \sigma_Q}(\sigma_Q) - \setf{F}'_{\downarrow \hat{\sigma}_Q}(\hat{\sigma}_Q)$ reflects the additional cost that agent $a_Q$ needs to explain through other relevant grounded constraints. To do this, it sorts the other agents according to their degree $|\mathcal{F}_a|$, where $|\mathcal{F}_a|$ is the number of relevant grounded constraints of agent $a$ for the alternative solution $\hat{\sigma}_Q$.\footnote{Agent $a_Q$ has this information after requesting for and receiving \emph{all} the relevant grounded constraints of each agent for the current solution $\sigma_Q$ because the numbers of relevant grounded constraints for both $\sigma_Q$ and the alternative $\hat{\sigma}_Q$ are identical.} Then, assuming that the maximum \emph{finite} cost of grounded constraints is $c_{\max}$, agent $a_Q$ will send REQUEST messages to the top $p$ agents with the largest degrees, such that the sum of their estimated constraint costs $\sum_{a, |a| \leq p} |\mathcal{F}_a| \cdot c_{\max} \geq \Delta$ is at least the difference in costs. As the estimated constraint costs may be overly optimistic since we are using the maximum finite cost as the estimate, it is likely that the actual constraint costs may not make up the difference. In this case, the agent repeats the process by computing an updated $\Delta$, taking into account the updated costs $\setf{F}'_{\downarrow \hat{\sigma}_Q}(\hat{\sigma}_Q)$ with the newly received constraints added, and identifies the next subset of agents to send REQUEST messages to. This process continues until either (1)~$\setf{F}'_{\downarrow \hat{\sigma}_Q}(\hat{\sigma}_Q) \geq \setf{F}_{\downarrow \sigma_Q}(\sigma_Q)$, in which case a valid explanation is found; or (2)~$\setf{F}'_{\downarrow \hat{\sigma}_Q}(\hat{\sigma}_Q) < \setf{F}_{\downarrow \sigma_Q}(\sigma_Q)$ even after adding all grounded constraints from all relevant agents, in which case a valid explanation does not exist.

\subsection{Theoretical Results}

We assume that messages are never lost and are received in the order that they are sent -- a standard assumption in the DCOP literature~\cite{fioretto:18}.

\begin{theorem}
CEDAR and all its optimized versions and suboptimal variants are guaranteed to terminate.
\label{th:completeness}
\end{theorem}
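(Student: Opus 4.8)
The plan is to argue termination separately for each component of the message-passing protocol, since the claim is about the base algorithm plus every optimization (O1, O2) and suboptimal variant (V1, V2). The key observation is that in all versions, the communication is a single round of fan-out/fan-in (or, in V2, a bounded number of such rounds), and all local computations operate on finite sets.

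First I would handle the base algorithm (Algorithm~\ref{alg}). Agent $a_Q$ sends at most $2|\mathcal{A}_Q|$ REQUEST messages, where $\mathcal{A}_Q$ is the finite set of agents owning variables in $\var{\sigma_Q}$. Each recipient, upon receiving a REQUEST, performs one call to getOwnGroundedConstraints (which terminates since each agent has finitely many constraints, each over a finite domain product) and sends back exactly one REPLY. Since messages are never lost and are delivered in order (the stated assumption), every REPLY is eventually received, so the ``Wait until all REPLY messages are received'' line (Line~7) unblocks after finitely many steps. The remaining lines are finite summations over finite constraint sets. Hence the base algorithm terminates.

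Next I would dispatch the optimizations. O1 only adds a post-processing loop at $a_Q$ that incrementally adds grounded constraints from the finite set $\setf{F}_{\downarrow \hat{\sigma}_Q}$ in sorted order; the loop runs for at most $|\setf{F}_{\downarrow \hat{\sigma}_Q}|$ iterations and then halts (either a valid explanation is found or all constraints have been added). O2 replaces the centralized sort with parallel local sorts plus priority-heap merging at $a_Q$; each local sort terminates on a finite list, the heap is built from $|\mathcal{A}_Q|$ heads, and at most $|\mathcal{F}_a|\cdot|\mathcal{A}_Q|$ extract-min/insert operations occur, as analyzed in Section~\ref{sec:XDCOP-opt} — so the total work is finite. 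V1 is even simpler: $a_Q$ merges the finitely many finite sorted lists into one finite pseudo-sorted list and scans it once.

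The only component needing a slightly more careful argument — and the main (mild) obstacle — is V2, because it sends REQUEST messages in several rounds rather than one. Here I would argue by a monotone-progress / potential-function argument: after each round, the set of agents that have been contacted strictly grows (at each step $a_Q$ picks the top-$p$ as-yet-uncontacted agents by degree and $p \geq 1$ whenever $\Delta > 0$, since otherwise the difference cannot be covered), and there are only finitely many agents. Therefore after at most $|\mathcal{A}_Q|$ rounds, either the accumulated cost $\setf{F}'_{\downarrow \hat{\sigma}_Q}(\hat{\sigma}_Q)$ meets the threshold $\setf{F}_{\downarrow \sigma_Q}(\sigma_Q)$, or grounded constraints from all relevant agents have been received and the algorithm reports that no valid explanation exists. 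Each round itself terminates by the same REQUEST/REPLY reasoning as the base case. Combining the four cases establishes the theorem.
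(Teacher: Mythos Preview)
Your proposal is correct and follows essentially the same approach as the paper's own (sketch) proof: termination of the base framework via the reliable REQUEST/REPLY round under the no-message-loss assumption, and termination of O1, O2, V1, V2 because in the worst case each processes all relevant grounded constraints from a finite set and then halts. Your treatment is more explicit than the paper's---in particular your monotone-progress argument for V2 spells out what the paper leaves implicit---but the underlying idea is the same.
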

\begin{proof} (Sketch)
The main CEDAR framework illustrated in Algorithm~\ref{alg} is guaranteed to terminate since the agent will receive REPLY messages from all agents that it sent REQUEST messages to, under the assumption that messages are never lost. Optimizations O1 and O2 as well as Variants V1 and V2 do not affect the termination guarantee because, in the worst case, they will add all relevant grounded constraints into either the set $\setf{F}^*_{\downarrow \hat{\sigma}_Q}$ for O1 and O2 or the set $\setf{F}'_{\downarrow \hat{\sigma}_Q}$ for V1 and V2, after which they terminate. 
\end{proof}

\begin{theorem}
CEDAR and all its optimized versions and suboptimal variants are guaranteed to return a valid explanation if one exists.
\label{th:correctness}
\end{theorem}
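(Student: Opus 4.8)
The plan is to reduce the statement to a single numerical condition and then verify that condition for the base framework and for each optimization and variant in turn. By Definition~\ref{def:solution}, the first component of any contrastive explanation for $\langle \mathcal{P}, \sigma, \mathcal{Q} \rangle$ must be the \emph{complete} set $\setf{F}_{\downarrow \sigma_Q}$ of all grounded constraints whose scope touches $\var{\sigma_Q}$, so the quantity $\setf{F}_{\downarrow \sigma_Q}(\sigma_Q)$ on the left of the validity inequality is a \emph{fixed constant} of the instance; the only freedom is the choice of subset of the complete set $\setf{F}^{\mathrm{all}}_{\downarrow \hat{\sigma}_Q}$ of grounded constraints relevant to $\hat{\sigma}_Q$ used on the right. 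Since constraint costs are non-negative (as in the domains considered here and as is standard in the DCOP literature), adding constraints to a subset never lowers its evaluated cost, so $\setf{F}^{\mathrm{all}}_{\downarrow \hat{\sigma}_Q}(\hat{\sigma}_Q)$ is the maximum of $\setf{F}'_{\downarrow \hat{\sigma}_Q}(\hat{\sigma}_Q)$ over all $\setf{F}'_{\downarrow \hat{\sigma}_Q} \subseteq \setf{F}^{\mathrm{all}}_{\downarrow \hat{\sigma}_Q}$. Therefore a valid explanation exists if and only if $\setf{F}^{\mathrm{all}}_{\downarrow \hat{\sigma}_Q}(\hat{\sigma}_Q) \geq \setf{F}_{\downarrow \sigma_Q}(\sigma_Q)$; I will call this the \emph{threshold condition}, and the whole proof reduces to showing that each procedure, when the threshold condition holds, returns a subset meeting it.

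First I would handle the base framework and the two optimizations. Algorithm~\ref{alg} collects, for both $\sigma_Q$ and $\hat{\sigma}_Q$, precisely the complete sets of relevant grounded constraints (agent $a_Q$'s own constraints together with everything returned by the queried agents), so the explanation it returns has right-hand side exactly $\setf{F}^{\mathrm{all}}_{\downarrow \hat{\sigma}_Q}(\hat{\sigma}_Q)$; by the threshold condition this explanation is valid whenever \emph{any} valid explanation exists. Optimization~O1 accumulates grounded constraints for $\hat{\sigma}_Q$ in decreasing cost order and stops the moment the running sum reaches $\setf{F}_{\downarrow \sigma_Q}(\sigma_Q)$; in the worst case it adds all of them and thus recovers $\setf{F}^{\mathrm{all}}_{\downarrow \hat{\sigma}_Q}$, so if the threshold condition holds the running sum meets the bound at or before exhaustion and a valid (indeed shortest, by a standard exchange argument, though this is more than we need) explanation is returned, and otherwise O1 correctly reports non-existence. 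Optimization~O2 selects exactly the same constraints as O1 --- the priority heap over the heads of the per-agent sorted lists always exposes the globally highest-cost remaining constraint --- and hence inherits the same guarantee.

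Then I would treat the two suboptimal variants, where only the \emph{order} of accumulation changes, never the pool of candidates. Variant~V1 adds grounded constraints in the pseudo-sorted round-robin order instead of in globally decreasing cost order, but it still adds \emph{every} relevant grounded constraint before it would declare non-existence, so in the worst case it again recovers $\setf{F}^{\mathrm{all}}_{\downarrow \hat{\sigma}_Q}$ and therefore returns a valid explanation exactly when the threshold condition holds (losing only the shortest-length guarantee). Variant~V2 expands the set of consulted agents round by round; since $\Delta > 0$ whenever it has not yet succeeded and there are still unconsulted agents, each round issues REQUEST messages to at least one new agent, and as there are finitely many agents it consults all of them in the worst case, again collecting $\setf{F}^{\mathrm{all}}_{\downarrow \hat{\sigma}_Q}$; so V2 too returns a valid explanation whenever one exists. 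Termination of all of these procedures is already established by Theorem~\ref{th:completeness}.

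The one genuinely delicate step is the threshold condition itself, i.e., the monotonicity claim that enlarging the set of grounded constraints included for $\hat{\sigma}_Q$ cannot decrease its evaluated cost; this is exactly where non-negativity of constraint costs is used. If negative costs were permitted, a valid explanation could exist (realized by a strict subset that drops some negative-cost constraint) while \emph{every} procedure above --- each of which is monotone in the set of included constraints and in the worst case just returns the complete set --- fails to produce a valid subset. I would therefore either state the non-negativity assumption explicitly at the start of the section or restrict the characterization so that it asserts only what the algorithms can actually detect; beyond that caveat, the argument for every variant is the routine observation that, in the worst case, it reconstructs the complete set of relevant grounded constraints.
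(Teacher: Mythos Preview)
Your argument follows essentially the same line as the paper's own proof sketch: both rest on the observation that the base framework computes the full sets $\setf{F}_{\downarrow \sigma_Q}$ and $\setf{F}_{\downarrow \hat{\sigma}_Q}$, and that every optimization and variant, in the worst case, accumulates the entire pool of relevant grounded constraints for $\hat{\sigma}_Q$ before declaring failure. Your treatment is considerably more detailed than the paper's two-sentence sketch --- you spell out the ``threshold condition'' reduction, handle each of O1, O2, V1, V2 separately, and verify for V2 that at least one new agent is consulted per round --- but the underlying idea is identical.

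The one substantive thing you add that the paper does not is the explicit identification of the non-negativity assumption on constraint costs. The paper defines costs over $\mathbb{R} \cup \{\infty\}$ and never states non-negativity, yet its proof sketch (like yours) implicitly relies on the monotonicity of the partial sum in the included constraints; your final paragraph correctly flags that without this assumption a valid explanation could live in a strict subset while the full set fails the inequality, in which case none of the procedures would find it. This is a genuine gap in the paper's statement/proof that you have surfaced rather than introduced, and your suggestion to state the assumption explicitly is the right fix.
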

\begin{proof} (Sketch)
The main CEDAR framework illustrated in Algorithm~\ref{alg} computes $\setf{F}_{\downarrow \sigma_Q}$, $\setf{F}_{\downarrow \hat{\sigma}_Q}$, $\setf{F}_{\downarrow \sigma_Q}(\sigma_Q)$, and $\setf{F}_{\downarrow \hat{\sigma}_Q}(\hat{\sigma}_Q)$ exactly as defined in Definition~\ref{def:solution}. Thus, if a valid explanation exists, CEDAR will return it. Optimizations O1 and O2 as well as Variants V1 and V2 do not affect the correctness because they either return a valid explanation after finding it or, in the worst case, guarantee that no valid explanation exists after adding all relevant grounded constraints.
\end{proof}

\section{Computational Experimental Evaluation}

We now describe our computational evaluations of CEDAR.\footnote{Simulation source code:  \url{https://github.com/benrachmut/ExplainableDCOP}} To demonstrate the generality of CEDAR, we evaluate it on two DCOP types, commonly used in the literature:

\squishlist
\item \textbf{Random Graphs:} Unstructured, abstract, constraint graph topologies with density $p_1 = \{0.2, 0.7 \}$. Each variable has $10$ values in its domain. The constraint costs were uniformly selected between $1$ and $100$. 

\item \textbf{Meeting Scheduling:} 
Using the EAV formulation~\cite{maheswaran:04a}, meetings are represented by variables and time slots form their domain. 
Each user $u_i$ has a preferred time slot $d_i^{*}$, chosen randomly from a uniform distribution. The cost for a user attending a meeting in time slot $d_j$ is $2^{|d_j - d_i^{*}|}$. Binary constraints arise when a user needs to attend two meetings, with equal time slots incurring an infinite cost\footnote{We used $10,\!000$ as a proxy for infinite cost in our experiments.} and non-equal time slots incurring a cost that reflects the preferences of all users of the meetings. 
Instances were created where each user attends two meetings and the number of users were varied to maintain a graph density $p_1$ of $0.5$. 
\squishend
To investigate the impact of the properties of DCOP solutions, we evaluated CEDAR with both \textbf{optimal} complete solutions and \textbf{1-opt} solutions. To construct queries with respect to a solution $\sigma$, we randomly selected the agent $a_Q$ receiving the query and the variables $\var{\sigma_Q}$ queried. These variables are also constrained with at least another variable in that set. We consider two approaches to select alternative values for the variables queried:
\squishlist
\item {\bf Random Baseline:} In experiments involving both optimal and 1-opt solutions, for each variable $x \in \var{\sigma_Q}$, an alternative value $d \in D_{x} \setminus \{d^*, d^{1}\}$ is randomly selected, where $d^*$ and $d^1$ are the values of $x$ in the optimal and 1-opt solutions, respectively. 
In experiments involving only optimal solutions, an alternative value $d \in D_{x} \setminus \{d^*\}$ is randomly selected.

\item {\bf Best Alternative:} Inspired by the fact that queries from human users typically involve alternative values that they think are best, this approach identifies the best alternative value to include in queries. To do this, we solved an alternate DCOP, where the domain of each variable $x' \notin \var{\sigma_Q}$ is exactly its value in the solution $\sigma$ and the domain of each variable $x \in \var{\sigma_Q}$ is its original domain $D_x$ without its value in the solution $\sigma$. 
\squishend
Finally, to measure distributed runtimes, we use \emph{Non-Concurrent Logic Operations} (NCLOs)~\cite{ZivanM06b,NetzerGM12}, which avoid double-counting concurrent computations of agents in a distributed environment.

\begin{figure}[t]
\centering
\small

\includegraphics[width=1\columnwidth]{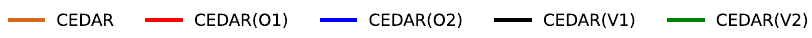}  

\begin{subfigure}[t]{1\columnwidth}
\centering
\includegraphics[height=15em]{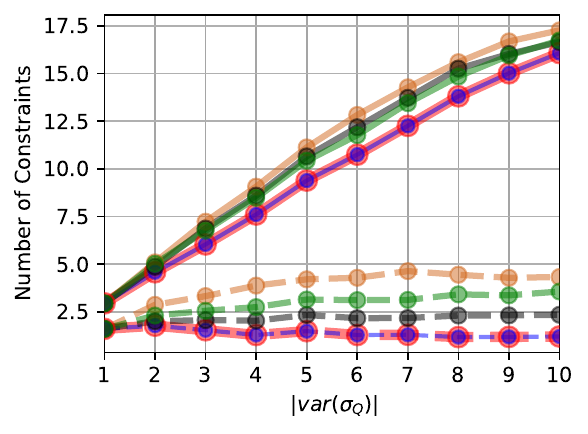} \\ 
\vspace{-0.5em} (a) Number of grounded constraints in the explanations as a function of the number of query variables $|\var{\sigma_Q}|$.
\end{subfigure} \\
\begin{subfigure}[t]{1\columnwidth}
\centering
\includegraphics[height=15em]{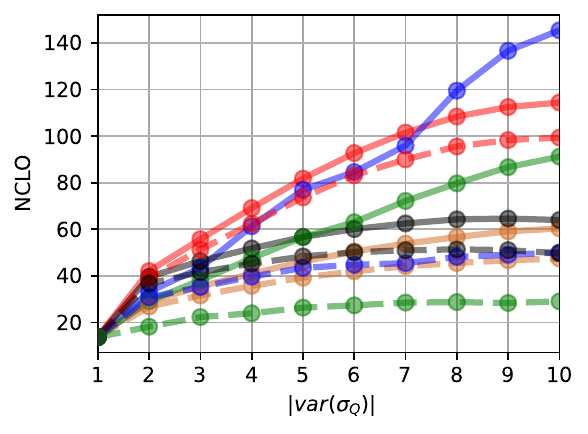} 
\\ 
\vspace{-0.5em} (b) NCLOs as a function of number of query variables $|\var{\sigma_Q}|$
\end{subfigure}

\caption{Experimental results for CEDAR, its optimized versions, and its variants on meeting scheduling problems with $10$ meetings for optimal complete solutions; solid lines denote results for best alternative queries and dashed lines denote results for random queries.}
\label{fig:results}
\end{figure}



\begin{table}[t]
\centering \footnotesize
\begin{minipage}{0.30\textwidth} 
\centering
\begin{tabular}{|c|cc|}
\hline
\multirow{2}{*}{$|\var{\sigma_Q}|$} & \multicolumn{2}{c|}{\textbf{Best Alternative}}            \\ 
                                   & \multicolumn{1}{c|}{\textbf{1-Opt}}   & \textbf{Optimal}   \\ \hline 
1                                  & \multicolumn{1}{c|}{100\%}     & 100\%        \\ 
3                                  & \multicolumn{1}{c|}{86\%}      & 100\%        \\ 
5                                  & \multicolumn{1}{c|}{79\%}      & 100\%        \\ 
7                                  & \multicolumn{1}{c|}{78\%}      & 100\%        \\ 
9                                  & \multicolumn{1}{c|}{81\%}      & 100\%        \\ \hline \hline
\multirow{2}{*}{$|\var{\sigma_Q}|$} & \multicolumn{2}{c|}{\textbf{Random}}               \\ 
                                   & \multicolumn{1}{c|}{\textbf{1-Opt}}   & \textbf{Optimal}   \\ \hline 
1                                  & \multicolumn{1}{c|}{100\%}      & 100\%        \\ 
3                                  & \multicolumn{1}{c|}{100\%}      & 100\%        \\ 
5                                  & \multicolumn{1}{c|}{100\%}      & 100\%        \\ 
7                                  & \multicolumn{1}{c|}{100\%}      & 100\%        \\
9                                  & \multicolumn{1}{c|}{100\%}      & 100\%        \\ \hline
\end{tabular}
\end{minipage}%
\hspace{1em} 
\begin{minipage}{0.30\textwidth} 
\centering
\begin{tabular}{|c|cc|}
\hline
\multirow{2}{*}{$|\setf{A}|$} & \multicolumn{2}{c|}{\textbf{Best Alternative}}            \\ 
                              & \multicolumn{1}{c|}{\textbf{1-Opt}}   & \textbf{Optimal}   \\ \hline
10                            & \multicolumn{1}{c|}{87\%}      & 100\%        \\
20                            & \multicolumn{1}{c|}{93\%}      & -          \\
30                            & \multicolumn{1}{c|}{100\%}      & -          \\
40                            & \multicolumn{1}{c|}{99\%}      & -          \\
50                            & \multicolumn{1}{c|}{100\%}      & -          \\ \hline \hline
\multirow{2}{*}{$|\setf{A}|$} & \multicolumn{2}{c|}{\textbf{Random}}               \\ 
                              & \multicolumn{1}{c|}{\textbf{1-Opt}}   & \textbf{Optimal}   \\ \hline
10                            & \multicolumn{1}{c|}{100\%}     & 100\%        \\
20                            & \multicolumn{1}{c|}{100\%}     & -          \\
30                            & \multicolumn{1}{c|}{100\%}     & -          \\
40                            & \multicolumn{1}{c|}{100\%}     & -          \\
50                            & \multicolumn{1}{c|}{100\%}     & -          \\ \hline
\end{tabular}

\end{minipage}

\caption{Percentage of instances for which valid explanations were found as a function of query variables $|\var{\sigma_Q}|$ (left, with $|\mathcal{A}| = 10$) and agents $|\mathcal{A}|$ (right, with $|\var{\sigma_Q}| = 5$), with best alternative (top) and random queries (bottom) on 1-opt and optimal solutions.}

\label{tab:results}
\end{table}

Table~\ref{tab:results} and Figure~\ref{fig:results} show some representative results for meeting scheduling problems, where each data point is an average over 100 X-DCOP instances.\footnote{See appendix for full set of computational results.} Table~\ref{tab:results} shows the percentage of instances for which CEDAR found valid explanations for different parameter configurations. We highlight the following observations:
\squishlist
\item The results demonstrate the scalability of CEDAR as it is able to scale to large problems with up to 50 agents. Results are unavailable for X-DCOPs with optimal DCOP solutions for 20 or more agents because such optimal solutions were not found due to scalability issues of existing complete DCOP solvers. 
\item CEDAR finds valid explanations for all X-DCOP instances with 1-opt solutions and $\var{\sigma_Q} = 1$. However, it fails to find valid explanations for some instances with queries that have more variables and best alternative values.
\item Interestingly, for 1-opt solutions, valid explanations are found for all instances with random alternative values, even for queries with $|\var{\sigma_Q}| > 1$. Therefore, even though valid explanations are not guaranteed to exist for such queries, in practice, they are quite common. 
\squishend

Figure~\ref{fig:results} presents the results of CEDAR, its two optimized versions, and its two suboptimal variants on X-DCOPs with complete optimal solutions. Since we are evaluating results where valid explanations are guaranteed to exist, we omit results for 1-opt solutions. The solid lines in the figure denote results for the best alternative queries, and the dashed lines denote results for random queries. 

Figure~\ref{fig:results}(a) shows that the lengths of valid explanations tend to increase as the queries increase in complexity in terms of the number of query variables $|\var{\sigma_Q}|$, especially for the best alternative queries. Within each type of query, CEDAR finds the longest explanations since it includes the full set of grounded constraints $\setf{F}_{\downarrow \hat{\sigma}_Q}$ for the alternative solution $\hat{\sigma}_Q$ in its explanations. CEDAR with optimizations O1 and O2 find shortest explanations, as expected since they optimize for explanation length. CEDAR with suboptimal variants V1 and V2 find longer explanations, but they are generally shorter than CEDAR's.

Figure~\ref{fig:results}(b) shows that the runtimes increase as the complexity of the queries increases, where the runtimes for the best alternative queries are greater than the runtimes for random queries. Within each query type, CEDAR is the fastest. CEDAR with optimizations O1 and O2 are the slowest as they sort grounded constraints to identify shortest explanations. For random queries, O2 is consistently faster than O1. However, for best alternative queries, this is only true when $|\var{\sigma_Q}|$ is small because $\log(|\mathcal{F}_a|) < |\mathcal{A}_Q| \cdot \log(|\mathcal{A}_Q|)$ (see the description of O2 in Section~\ref{sec:XDCOP-opt}). As $|\var{\sigma_Q}|$ increases, $\log(|\mathcal{F}_a|)$ increases, and O1 loses its speed-up compared to O2. Finally, CEDAR with variants V1 and V2 have smaller runtimes, but they are still larger than CEDAR's. Therefore, they show the empirical tradeoffs made by CEDAR with V1 and V2 between solution quality in terms of explanation length and runtime in terms of NCLOs.

\section{Human User Study}
\label{sec:user-study}

We conducted a human user study to examine whether and how explanation length affects user comprehension and satisfaction in DCOP-based systems. We used meeting scheduling problems, as it is an application domain that users are likely to be very familiar with.
We now briefly describe our study.\footnote{See appendix for complete study materials, including the full text of all explanations and questions as well as user demographic information.}

\xhdr{Study Design:} We created a synthetic meeting scheduling scenario involving four participants (the user and three others -- Bob, Charlie, and David) coordinating two meetings (M1 and M2) over four possible time slots: Morning, noon, afternoon, and evening. Bob and the user needed to attend both meetings, while Charlie and David needed to attend only one.

The scheduler proposed holding M1 in the afternoon and M2 in the evening. Participants were presented with a contrastive query that compared this schedule with an alternative where M1 is at noon and M2 is in the afternoon. To evaluate how explanation length affects user comprehension and preferences, we created two groups that were shown identical scenarios but received different pairs of explanations: 
\squishlist
    \item \textit{Group 1} compared explanations of length two versus three.
    \item \textit{Group 2} compared explanations of length two versus four.
\squishend

Here, the length of an explanation refers to the number of agents' preferences used to justify the scheduling decision.


For each pair of explanations, participants were asked two key questions: (Q1)~Which explanation was easier to understand; and (Q2)~Which explanation they would be more likely to provide when explaining the scheduling decision to others. 

Our primary hypothesis is as follows:

\begin{quote}
    ``\emph{Participants would prefer and better understand shorter explanations compared to longer ones.}''
\end{quote}

\xhdr{Study Results and Discussion:} We recruited 130 participants through Prolific~\cite{palan2018prolific}, randomly assigning 65 participants to each group.\footnote{The study was approved by our institution's ethics board and adhered to the guidelines for responsible research practices.} All participants were proficient in English, and were compensated with \$2.50. To ensure data quality, we included two attention check questions that verify the understanding of the scenario by the participants. We removed participants who failed both attention checks (one from Group~1 and two from Group~2). To evaluate our hypothesis, we set our significance level at $\alpha = 0.05$.

\begin{table}[!t]
\centering \footnotesize
\begin{tabular}{|l|ll|ll|}
\hline
\multirow{2}{*}{\textbf{Measure}} & \multicolumn{2}{c|}{\textbf{Group 1}} & \multicolumn{2}{c|}{\textbf{Group 2}} \\
 & \multicolumn{2}{c|}{(expl. length 2 vs 3)} & \multicolumn{2}{c|}{(expl. length 2 vs 4)} \\
\hline
Q1: \textit{Easier to understand}
 & \multicolumn{2}{c|}{41 (64.1\%) prefer 2}
 & \multicolumn{2}{c|}{51 (81.0\%) prefer 2}\\

Q2: \textit{Would provide to others} 
 & \multicolumn{2}{c|}{41 (64.1\%) prefer 2}
 & \multicolumn{2}{c|}{49 (77.8\%) prefer 2} \\
\hdashline

$\chi^2$ statistic   
 & Q1: 5.06 & Q2: 5.06
 & Q1: 24.14 & Q2: 19.44 \\

$p$-value              
 & Q1: 0.024 & Q2: 0.024
 & Q1: $<$ 0.001 & Q2: $<$ 0.001 \\

Effect size (Cramer's V) 
 & Q1: 0.281 & Q2: 0.281 
 & Q1: 0.619 & Q2: 0.556 \\
\hline
\end{tabular}%
\caption{User study results showing the number of participants who favored the shorter explanation, along with the results of the statistical analysis for $\alpha = 0.05$.}
\label{tab:study-results}
\end{table}

Table~\ref{tab:study-results} presents the results of our analysis, which strongly support our hypothesis. In both experimental conditions, participants showed a clear preference for shorter explanations. In Group 1, 64.1\% of participants found the shorter explanation easier to understand and were more likely to use it when explaining to others ($\chi^2 \!=\! 5.06$ and $p \!=\! 0.024$ for both Q1 and Q2), with a medium effect size (Cramer's V $\!=\! 0.281$ for both Q1 and Q2). This preference was substantially stronger in Group 2, where the comparison was between explanations of larger length differences. Here, 81.0\% found the shorter explanation easier to understand, and 77.8\% preferred to provide it to others ($\chi^2 \!=\! 24.14$ and $19.44$ for Q1 and Q2, respectively, $p \!<\! 0.001$ for both), with large effect sizes (Cramer's V $\!=\! 0.619$ and $0.556$ for Q1 and Q2, respectively). 

These findings suggest that users strongly prefer concise explanations and this preference becomes more pronounced with increasing difference in explanation lengths. Thus, these results directly support our attempts to design optimization techniques for finding the shortest valid explanations.

\section{Related Work}

The grounded constraints in the explanations of X-DCOP bear some similarity to \emph{nogoods} in Constraint Satisfaction Problems (CSPs) and Constraint Optimization Problems (COPs) including their distributed versions (DCSPs and DCOPs)~\cite{schiex:94,Yokoo00,SilaghiY09}. Nogoods are value assignments to subsets of variables that cannot lead to a solution because they violate one or more constraints. However, the role of nogoods differ substantially. Nogoods are used to speed up the search of solutions by pruning portions of the search space that can be ignored. However, grounded constraints are used as solutions themselves.

There is also a great deal of literature on Explainable CSP~\cite{liffiton2008algorithms,laborie14,GuptaGO21}. A common theme across many of these methods is their use of hitting sets to find minimal unsatisfiable sets (MUSes) to explain why a CSP is unsatisfiable. An MUS is a set of constraints that is unsatisfiable but any of its subsets are satisfiable. Therefore, similar to our goal of finding shortest explanations, MUSes are subset-minimal justifications for why a CSP is unsatisfiable. Our approach can thus be viewed as a generalization of MUSes to weighted constraints, where constraints are no longer Boolean, but have associated costs instead. The approach to use MUSes as explanations have also been explored in other explainable subareas, such as explainable planning and scheduling~\cite{vas21,vasileioua2023lasp}, where MUSes correspond to sets of logical rules and constraints that must be satisfied by a plan or schedule. Finally, it is important to note that the existing explainable CSP, planning, and scheduling approaches discussed above are generally centralized approaches, while CEDAR is a distributed framework.


\section{Conclusions}

In this paper, we introduced the Explainable Distributed Constraint Optimization Problem (X-DCOP) model, an extension of the DCOPs that incorporates contrastive queries and explanations to enhance human interpretability. By addressing a critical gap in the adoption of DCOPs in the real world, X-DCOP facilitates understanding of distributed solutions, particularly in domains where human stakeholders are directly impacted. We defined key properties for valid explanations, presented theoretical guarantees for their existence, and proposed CEDAR, a distributed framework with optimized and suboptimal variants for computing explanations. Our empirical evaluations demonstrated the scalability of the framework and the trade-offs between explanation length and runtime. Furthermore, a user study confirmed the preference for concise explanations, underscoring the practical utility of the model. By bridging the fields of explainable AI and multi-agent systems, X-DCOP paves the way for more transparent, trustworthy, and user-aligned distributed optimization solutions, enabling broader application in real-world scenarios. 

 

\vskip 0.2in
\bibliography{X_Dcop}
\bibliographystyle{theapa}

\section*{Appendix A. Full Set of Computational Results}

In this appendix, we present the full set of results on meeting scheduling and random uniform problems (both sparse and dense). Throughout the figures, we show both random and best alternative queries, represented by dashed and solid curves, respectively. This section is organized by metric and the scale variable examined for each type of problem. The first set of figures illustrates how 1-Opt and optimal solutions scale as a function of problem size, and variables in query size. In the second part, we present the differences in performance between CEDAR, its optimizations, and its variants.

Figures \ref{fig:1_results_percent_query_meeting}, \ref{fig:1_results_percent_query_sparse}, and \ref{fig:1_results_percent_query_dense} demonstrate that, for best alternative queries, all instance explanations are valid for 1-Opt solutions when $\var{\sigma_Q} = 1$. However, as $\var{\sigma_Q}$ increases, the number of valid instances decreases at different rates depending on the type of problem. For optimal solutions, valid explanations are obtained regardless of the size of $\var{\sigma_Q}$. These results empirically confirm Theorem $1$. For random alternative values, valid explanations were found for both 1-opt and optimal solution for sizes of $\var{\sigma_Q}$. Although the 1-Opt solution does not guarantee an optimal outcome, it is highly likely to yield an improved solution compared to one selected randomly.

Figures \ref{fig:5_results_percent_A_meeting}, \ref{fig:5_results_percent_A_sparse}, and \ref{fig:5_results_percent_A_dense} demonstrate the scalability of CEDAR, as it is able to handle large problems with up to $50$ agents. Results for X-DCOPs with optimal DCOP solutions are unavailable for $15$ or more agents, as such optimal solutions were not found due to the scalability issues of existing complete DCOP solvers. For 1-Opt solutions, the percentage of valid explanations increases as the number of agents increases for the best alternative queries. It is empirically evident that, for $|A| > 35$, a valid explanation is provided for all instances, even for queries with $|\var{\sigma_Q}| > 1$, across all DCOP problem types.

In Figures \ref{fig:7_results_num_constraints_A_meeting}, \ref{fig:7_results_num_constraints_A_sparse}, and \ref{fig:7_results_num_constraints_A_dense}, we show the number of constraints in explanations as a function of the number of agents. It shows that explanations for larger problems require more grounded constraints. Random alternative queries generally require fewer constraints, especially in meeting scheduling problems, where infinite or highly costly constraints often make a single constraint sufficient for a valid explanation.

Figures \ref{fig:6_results_NCLO_A_meeting}, \ref{fig:6_results_NCLO_A_sparse}, and \ref{fig:6_results_NCLO_A_dense} illustrate how NCLOs for CEDAR(O1) scale with the number of agents. Regardless of query type or graph topology, NCLOs increase linearly with the number of agents. This aligns with Figures \ref{fig:7_results_num_constraints_A_meeting}, \ref{fig:7_results_num_constraints_A_sparse}, and \ref{fig:7_results_num_constraints_A_dense}, as more agents in the problems lead to more grounded constraints in explanation sets, increasing computational demands.

Next, we illustrate how CEDAR, its optimizations, and variants differ in their computational requirements. Figures \ref{fig:3_results_num_constraints_Q_meeting}, \ref{fig:3_results_num_constraints_Q_sparse}, and \ref{fig:3_results_num_constraints_Q_dense} show the explanation length (i.e., number of constraints in explanation) as a function of query size. As the number of query variables increases, valid explanations generally grow longer, especially for best alternative queries, which tend to present closer-cost alternatives to the solution, resulting in more grounded constraints in the explanations. Compared to random alternative queries, best alternative queries typically present alternatives with costs closer to the solution, resulting in smaller constraint costs. Consequently, the number of constraints required for these explanations is larger than for random alternative queries.

Next, we illustrate how CEDAR, its optimizations, and variants differ in their computational requirements. Figures \ref{fig:3_results_num_constraints_Q_meeting}, \ref{fig:3_results_num_constraints_Q_sparse}, and \ref{fig:3_results_num_constraints_Q_dense} demonstrate the relationship between the number of constraints in explanations (optimality) and query size. Across all problem types, the length of valid explanations tends to increase with the complexity of the queries, as measured by the number of query variables ($|\var{\sigma_Q}|$). This trend is more pronounced for best alternative queries, which often present alternatives with costs closer to the solution. These closer-cost alternatives necessitate more grounded constraints in the explanations to achieve validity. As the number of variables in the query increases, the alternatives tend to align more closely with the solution's cost. Consequently, the number of constraints required for these explanations increases. Across all types of problems Within each query type, CEDAR finds the longest explanations since it includes the full set of grounded constraints $\setf{F}_{\downarrow \hat{\sigma}_Q}$ for the alternative solution $\hat{\sigma}_Q$ in its explanations. CEDAR with optimizations O1 and O2 find shortest explanations, as expected since they optimize for explanation length. CEDAR with suboptimal variants V1 and V2 find longer explanations, but they are generally shorter than CEDAR's.

Figures~\ref{fig:2_NCLO_Q_meeting}, \ref{fig:2_NCLO_Q_sparse}, and \ref{fig:2_NCLO_Q_dense} shows that the runtimes increase as the queries increases in complexity, where runtimes for best alternative queries are larger than runtimes for random queries across all types of problems. This trend is more pronounced for variant V2, where grounded explanations are potentially collected by only a subset of agents rather than all of them. This is particularly evident for random alternatives, where higher constraint costs make valid explanations easier to identify. Apparently, fewer agents need to be included in the explanation process, leading to reduced runtimes for these queries. Within each query type, CEDAR is the fastest. CEDAR with optimizations O1 and O2 are the slowest as they sort grounded constraints to identify shortest explanations. When $|\var{\sigma_Q}|$ is small, optimization O2 is faster because $\log(|\mathcal{F}_a|) < |\mathcal{A}_Q| \cdot \log(|\mathcal{A}_Q|)$ (see description of O2 in Section 4.1). However, as $|\var{\sigma_Q}|$ increases, $\log(|\mathcal{F}_a|)$ increases, and O1 loses its speedup compared to O2. Finally, CEDAR with variants V1 and V2 have smaller runtimes, but they are still larger than CEDAR's. Therefore, they show the empirical tradeoffs made by CEDAR with V1 and V2 between solution quality in terms of explanation length and runtime in terms of NCLOs.

Figures~\ref{fig:4_NCLO_Q_meeting}, \ref{fig:4_NCLO_sparse}, and \ref{fig:4_NCLO_Q_dense} show the difference in cost between the grounded constraints in the solution set and those in the alternative set as a function of NCLOs for $|\var{\sigma_Q}| = 10$. For meeting scheduling problems, the sigmoid value of the cost difference is presented, as spatial constraints are represented by large costs. To ensure consistency when comparing the average cost differences across all instances of X-DCOP, it is assumed that $a_Q$ knows the costs of the solution set constraints at $NCLO = 0$. The increase in cost difference over runtime indicates the point at which alternative constraints begin to be incrementally added to the set. Since the solutions are optimal, all types of CEDAR yield a non-negative cost difference at termination (alternative constraints have costs that are higher than or equal to the costs of the solution constraints, as described in Definition 2). Consistent with Figures~\ref{fig:2_NCLO_Q_meeting}, \ref{fig:2_NCLO_Q_sparse}, and \ref{fig:2_NCLO_Q_dense}, CEDAR O1 and O2 require the longest runtimes for best alternative queries. A tradeoff emerges between CEDAR O1's preprocessing step, sorting all alternative constraint sets, and CEDAR O2's dynamic adjustment of the cost delta. CEDAR O1 maintains a consistent cost delta for a longer duration, as the preprocessing step minimizes the impact of adding alternative constraints on runtime. In contrast, CEDAR O2 exhibits a more gradual increase in cost delta, as $a_Q$ dynamically inserts and extracts constraints from the heap. This incremental approach increases computational overhead but results in a smoother cost difference growth. However, when examining random alternatives, CEDAR O2 demonstrates shorter runtimes. Such queries do not reflect the worst-case scenario, as the explanations require fewer alternative constraints. In these cases, CEDAR O2 requires fewer insertions of new constraints into the heap, enabling it to terminate faster.

\begin{figure}[h]
\centering
\small
\includegraphics[height=15em]{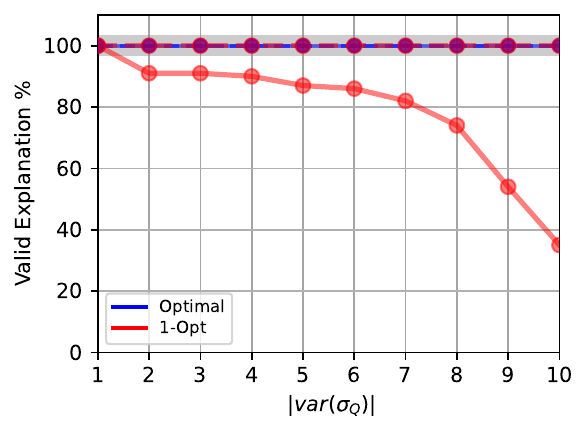} \\ 
\vspace{-0.5em} 
\caption{Percentage of instances for which valid explanations were found using CEDAR (O1) as a function of query variables $|\var{\sigma_Q}|$  with $|\mathcal{A}| = 10$ on 1-opt and optimal solutions for \textbf{meeting scheduling problems}.}
\label{fig:1_results_percent_query_meeting}
\end{figure}

\begin{figure}[h]
\centering
\small
\includegraphics[height=15em]{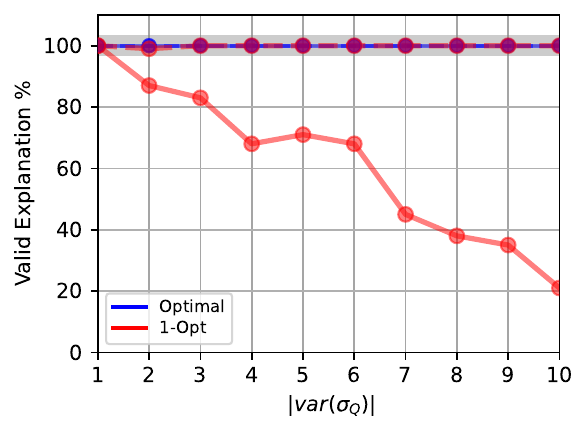} \\ 
\vspace{-0.5em} 
\caption{Percentage of instances for which valid explanations were found using CEDAR (O1) as a function of query variables $|\var{\sigma_Q}|$  with $|\mathcal{A}| = 10$ on 1-opt and optimal solutions for \textbf{sparse random uniform problems}.}
\label{fig:1_results_percent_query_sparse}
\end{figure}

\begin{figure}[h]
\centering
\small
\includegraphics[height=15em]{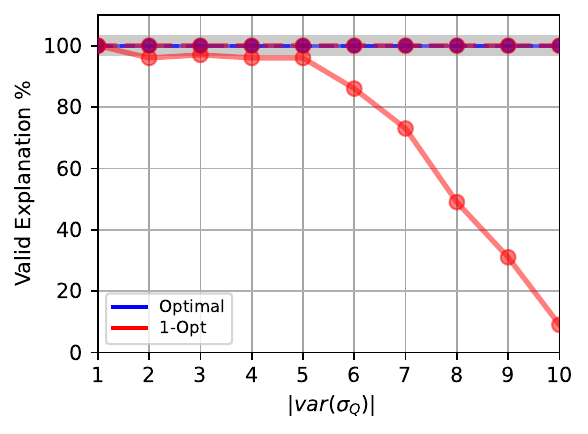} \\ 
\vspace{-0.5em} 
\caption{Percentage of instances for which valid explanations were found using CEDAR (O1) as a function of query variables $|\var{\sigma_Q}|$ 
with $|\mathcal{A}| = 10$ on 1-opt and optimal solutions for \textbf{dense random uniform problems}.}
\label{fig:1_results_percent_query_dense}
\end{figure}


\begin{figure}[h]
\centering
\small
\includegraphics[height=15em]{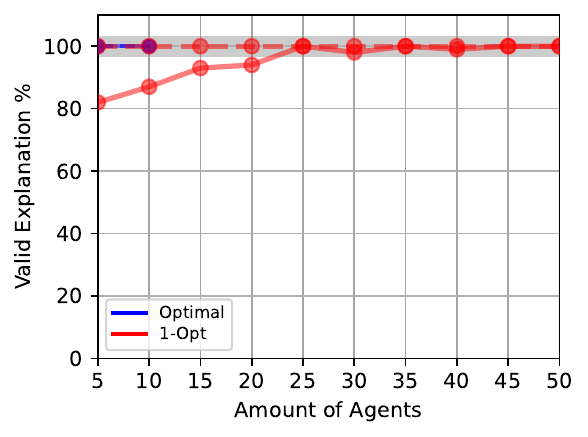} \\ 
\vspace{-0.5em} 
\caption{Percentage of instances for which valid explanations were found using CEDAR (O1) as a function of agents $|\mathcal{A}|$ with $|\var{\sigma_Q}| = 5$  on 1-opt and optimal solutions for \textbf{meeting scheduling problems}.}
\label{fig:5_results_percent_A_meeting}
\end{figure}

\begin{figure}[h]
\centering
\small
\includegraphics[height=15em]{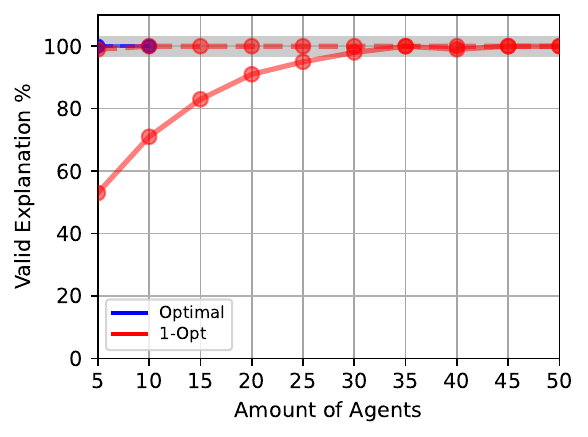} \\ 
\vspace{-0.5em} 
\caption{Percentage of instances for which valid explanations were found using CEDAR (O1) as a function of agents $|\mathcal{A}|$ with $|\var{\sigma_Q}| = 5$  on 1-opt and optimal solutions for \textbf{sparse random uniform problems}.}
\label{fig:5_results_percent_A_sparse}
\end{figure}

\begin{figure}[h]
\centering
\small
\includegraphics[height=15em]{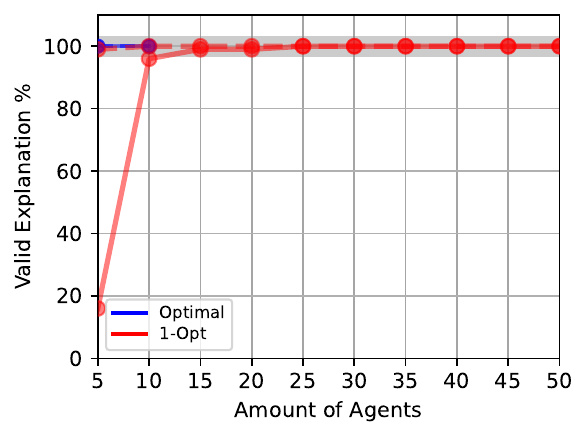} \\ 
\vspace{-0.5em} 
\caption{Percentage of instances for which valid explanations were found using CEDAR (O1) as a function of agents $|\mathcal{A}|$ with $|\var{\sigma_Q}| = 5$  on 1-opt and optimal solutions for \textbf{dense random uniform problems}.}
\label{fig:5_results_percent_A_dense}
\end{figure}


\begin{figure}[h]
\centering
\small
\includegraphics[height=15em]{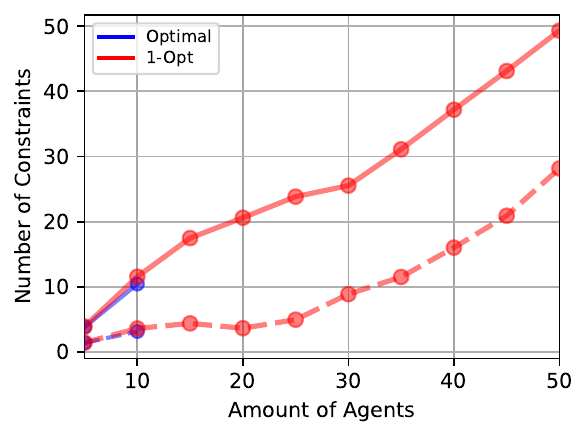} \\ 
\vspace{-0.5em} 
\caption{Number of grounded constraints in the explanations (using CEDAR(O1)) as a function of agents $|\mathcal{A}|$ with $|\var{\sigma_Q}| = 5$ for explanations generated using CEDAR(O1) on 1-opt and optimal solutions for \textbf{meeting scheduling problems}.}
\label{fig:7_results_num_constraints_A_meeting}
\end{figure}

\begin{figure}[h]
\centering
\small
\includegraphics[height=15em]{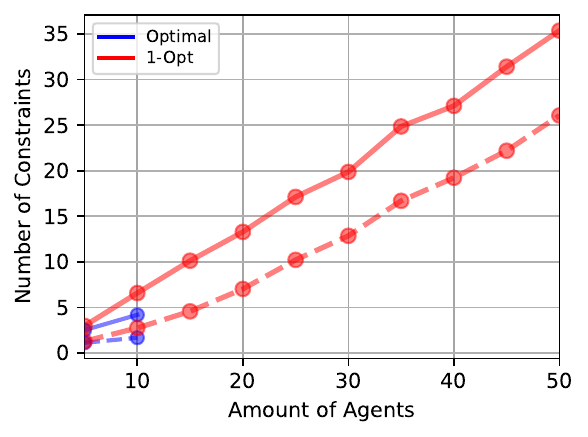} \\ 
\vspace{-0.5em} 
\caption{Number of grounded constraints in the explanations (using CEDAR(O1)) as a function of agents $|\mathcal{A}|$ with $|\var{\sigma_Q}| = 5$ for explanations generated using CEDAR(O1) on 1-opt and optimal solutions for \textbf{sparse random uniform problems}.}
\label{fig:7_results_num_constraints_A_sparse}
\end{figure}

\begin{figure}[h]
\centering
\small
\includegraphics[height=15em]{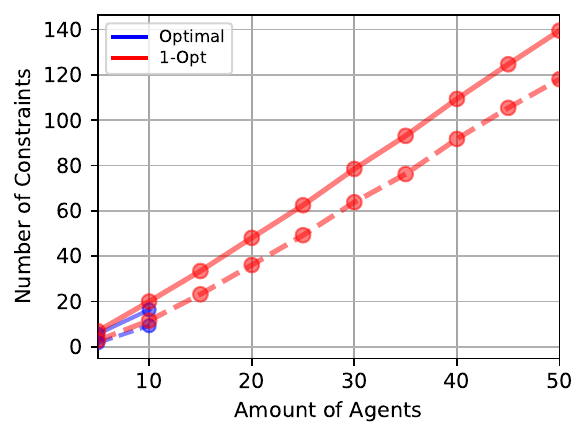} \\ 
\vspace{-0.5em} 
\caption{Number of grounded constraints in the explanations (using CEDAR(O1)) as a function of agents $|\mathcal{A}|$ with $|\var{\sigma_Q}| = 5$ for explanations generated using CEDAR(O1) on 1-opt and optimal solutions for \textbf{dense random uniform problems}.}
\label{fig:7_results_num_constraints_A_dense}
\end{figure}


\begin{figure}[h]
\centering
\small
\includegraphics[height=15em]{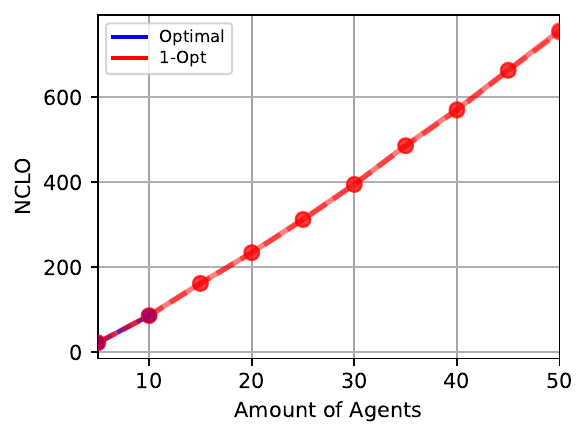} \\ 
\vspace{-0.5em} 
\caption{NCLO as a function of agents $|\mathcal{A}|$ with $|\var{\sigma_Q}| = 5$ for explanations generated using CEDAR(O1) on 1-opt and optimal solutions for \textbf{meeting scheduling problems}.}
\label{fig:6_results_NCLO_A_meeting}
\end{figure}

\begin{figure}[h]
\centering
\small
\includegraphics[height=15em]{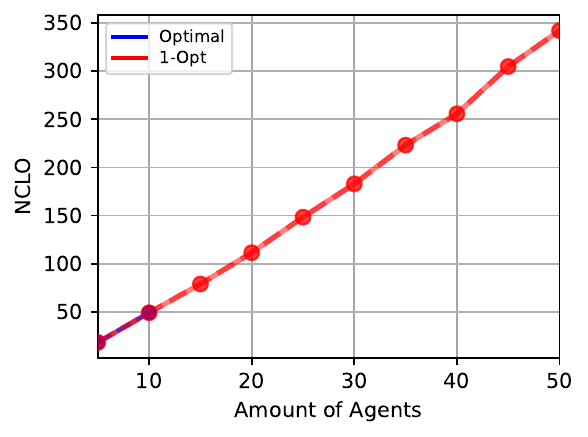} \\ 
\vspace{-0.5em} 
\caption{NCLO as a function of agents $|\mathcal{A}|$ with $|\var{\sigma_Q}| = 5$ for explanations generated using CEDAR(O1) on 1-opt and optimal solutions for \textbf{sparse random uniform problems}.}
\label{fig:6_results_NCLO_A_sparse}
\end{figure}

\begin{figure}[h]
\centering
\small
\includegraphics[height=15em]{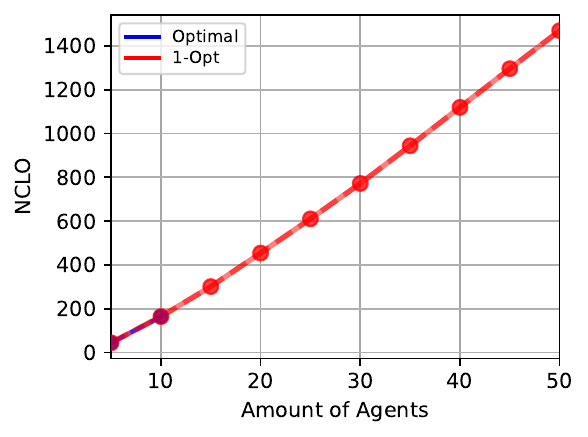} \\ 
\vspace{-0.5em} 
\caption{NCLO as a function of agents $|\mathcal{A}|$ with $|\var{\sigma_Q}| = 5$ for explanations generated using CEDAR(O1) on 1-opt and optimal solutions for \textbf{dense random uniform problems}.}
\label{fig:6_results_NCLO_A_dense}
\end{figure}


\begin{figure}[h]
\centering
\small
\includegraphics[height=15em]{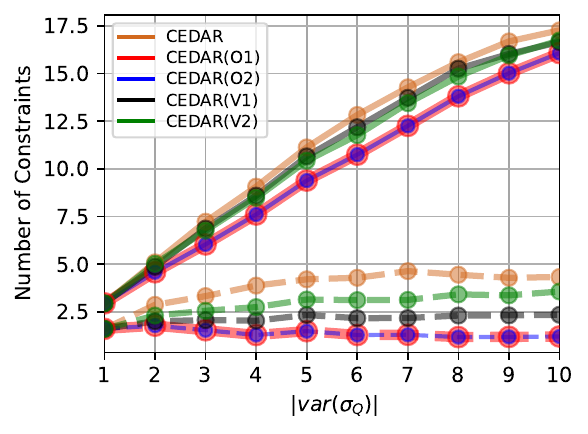} \\ 
\vspace{-0.5em} 
\caption{Number of grounded constraints in the explanations as a function of number of query variables $|\var{\sigma_Q}|$, for CEDAR, its optimized versions, and its variants on \textbf{meeting scheduling problems} with $10$ meetings for optimal complete solutions.}
\label{fig:3_results_num_constraints_Q_meeting}
\end{figure}

\begin{figure}[h]
\centering
\small
\includegraphics[height=15em]{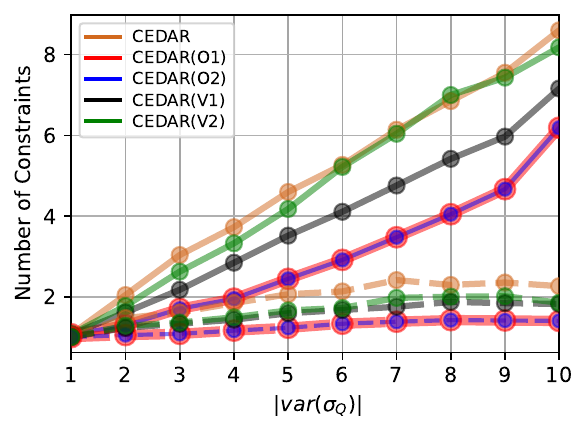} \\ 
\vspace{-0.5em} 
\caption{Number of grounded constraints in the explanations as a function of number of query variables $|\var{\sigma_Q}|$, for CEDAR, its optimized versions, and its variants on \textbf{sparse random uniform problems} with $10$ agents for optimal complete solutions.}

\label{fig:3_results_num_constraints_Q_sparse}
\end{figure}

\begin{figure}[h]
\centering
\small
\includegraphics[height=15em]{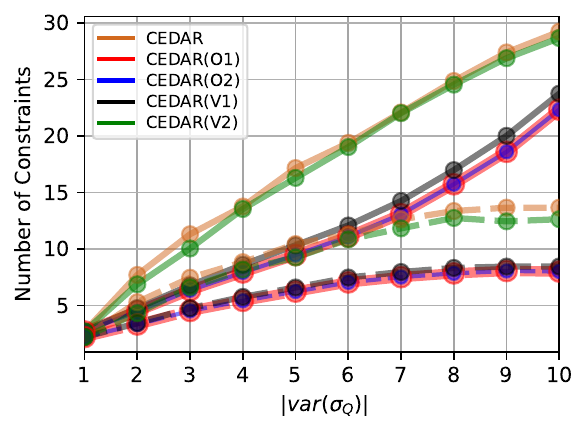} \\ 
\vspace{-0.5em} 
\caption{Number of grounded constraints in the explanations as a function of number of query variables $|\var{\sigma_Q}|$, for CEDAR, its optimized versions, and its variants on \textbf{dense random uniform problems} with $10$ agents for optimal complete solutions.}

\label{fig:3_results_num_constraints_Q_dense}
\end{figure}


\begin{figure}[h]
\centering
\small
\includegraphics[height=15em]{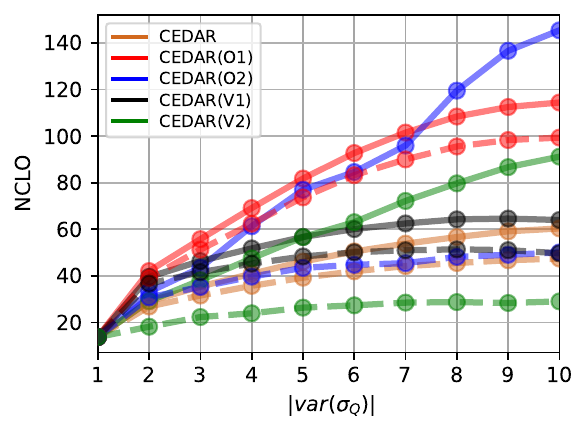} \\ 
\vspace{-0.5em} 
\caption{NCLO as a function of number of query variables $|\var{\sigma_Q}|$, for CEDAR, its optimized versions, and its variants on \textbf{meeting scheduling problems} with $10$ meetings for optimal complete solutions.}
\label{fig:2_NCLO_Q_meeting}
\end{figure}

\begin{figure}[h]
\centering
\small
\includegraphics[height=15em]{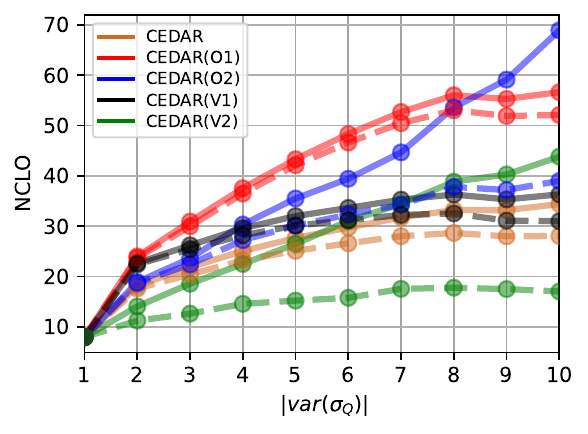} \\ 
\vspace{-0.5em} 
\caption{NCLO as a function of number of query variables $|\var{\sigma_Q}|$, for CEDAR, its optimized versions, and its variants on \textbf{sparse random uniform problems} with $10$ agents for optimal complete solutions.}

\label{fig:2_NCLO_Q_sparse}
\end{figure}

\begin{figure}[h]
\centering
\small
\includegraphics[height=15em]{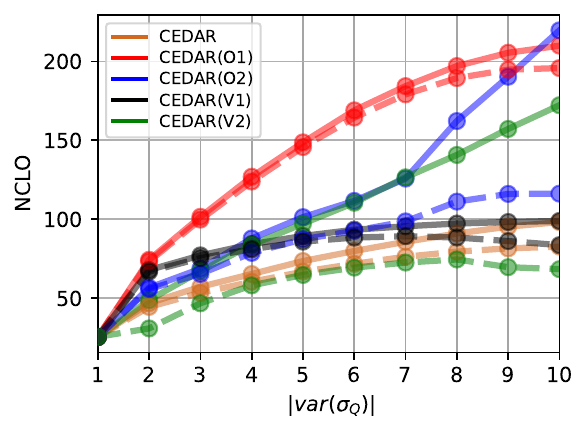} \\ 
\vspace{-0.5em} 
\caption{NCLO as a function of number of query variables $|\var{\sigma_Q}|$, for CEDAR, its optimized versions, and its variants on \textbf{dense random uniform problems} with $10$ agents for optimal complete solutions.}

\label{fig:2_NCLO_Q_dense}
\end{figure}


\begin{figure}[h]
\centering
\small
\includegraphics[height=15em]{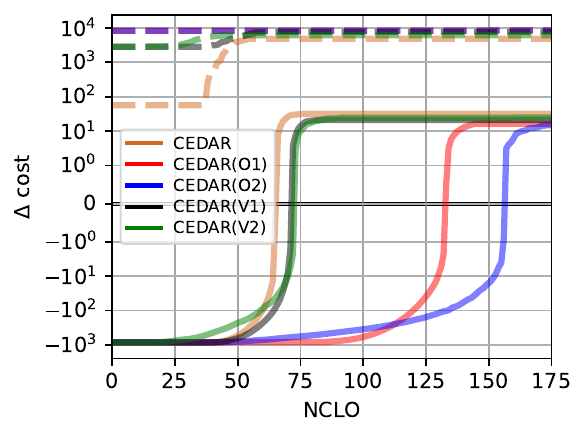} \\ 
\vspace{-0.5em} 
\caption{The sigmoid difference in costs $\setf{F}_{\downarrow \hat{\sigma}_Q}\!(\hat{\sigma}_Q) - \setf{F}_{\downarrow \sigma_Q}\!(\sigma_Q)$ as a function of NCLOs, for CEDAR, its optimized versions, and its variants on \textbf{meeting scheduling problems} with $10$ meetings for optimal complete solutions.}
\label{fig:4_NCLO_Q_meeting}
\end{figure}

\begin{figure}[h]
\centering
\small
\includegraphics[height=15em]{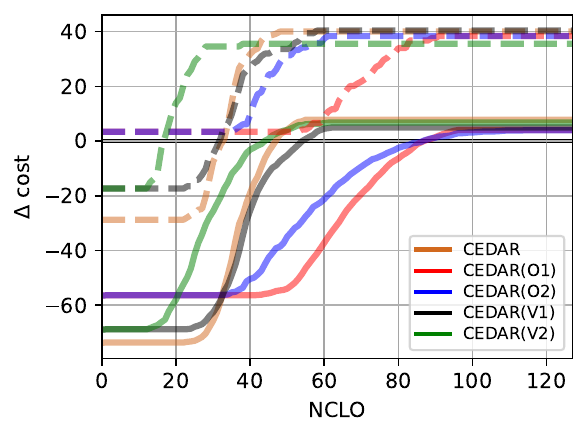} \\ 
\vspace{-0.5em} 
\caption{The difference in costs $\setf{F}_{\downarrow \hat{\sigma}_Q}\!(\hat{\sigma}_Q) - \setf{F}_{\downarrow \sigma_Q}\!(\sigma_Q)$ as a function of NCLOs, for CEDAR, its optimized versions, and its variants on \textbf{sparse random uniform problems} with $10$ agents for optimal complete solutions.}

\label{fig:4_NCLO_sparse}
\end{figure}

\begin{figure}[h]
\centering
\small
\includegraphics[height=15em]{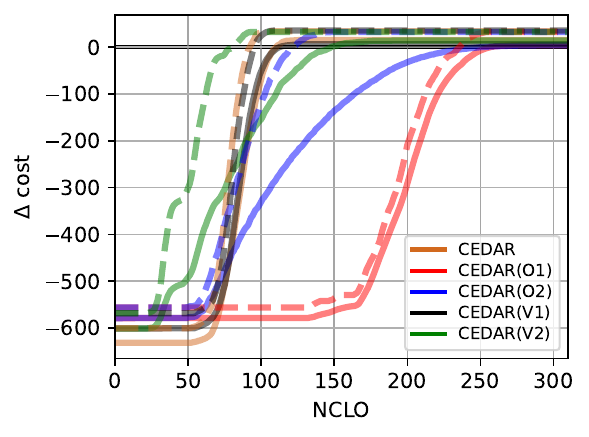} \\ 
\vspace{-0.5em} 
\caption{The difference in costs $\setf{F}_{\downarrow \hat{\sigma}_Q}\!(\hat{\sigma}_Q) - \setf{F}_{\downarrow \sigma_Q}\!(\sigma_Q)$ as a function of NCLOs, for CEDAR, its optimized versions, and its variants on \textbf{sparse dense uniform problems} with $10$ agents for optimal complete solutions.}
\label{fig:4_NCLO_Q_dense}
\end{figure}

\clearpage
\clearpage

\section*{Appendix B.Human User Study}

\xhdr{Study Design:} We created a synthetic meeting scheduling scenario designed to capture the essential elements of a DCOP-based scheduling problem while remaining accessible to participants without technical background. The scenario involved four participants (the user and three others - Bob, Charlie, and David) coordinating two meetings (M1 and M2) across four possible time slots: morning, noon, afternoon, and evening. To reflect realistic scheduling constraints, we structured the scenario such that some participants needed to attend both meetings while others had more limited participation requirements: the user and Bob were required to attend both meetings, while Charlie needed to attend only M1 and David only M2.

For this scenario, the scheduling system proposed holding M1 in the afternoon and M2 in the evening. Participants were presented with a contrastive query comparing this schedule to an alternative where M1 would be at noon and M2 in the afternoon. To systematically evaluate how explanation length affects user comprehension and preferences, we created two groups that were shown identical scenarios but received different pairs of explanations to compare:

\squishlist
    \item \textit{Group 1} compared explanations of length two versus three.
    \item \textit{Group 2} compared explanations of length two versus four.
\squishend

Here, the length of an explanation refers to the number of agents' preferences used to justify the scheduling decision.

For each pair of explanations, participants were asked two key questions: (Q1) which explanation was easier to understand; and (Q2) which explanation they would be more likely to provide when explaining the scheduling decision to others. 

The full user study shown to all participants is depicted progressively in Figures~\ref{intro} to~\ref{questions}.

\xhdr{Participant Recruitment and Demographics:} We recruited 130 participants through Prolific, randomly assigning 65 participants to each group.\footnote{The study was approved by our institution’s ethics board and adhered to the guidelines for responsible research practices.} All participants were proficient in English, and were compensated with a base payment of \$2.50 for completing the study. To ensure data quality, we included two attention check questions verifying participants' understanding of the scenario. After removing participants who failed both attention checks (1 from Group 1 and 2 from Group 2), our final analysis included 127 participants. The participants were diverse in terms of age (mean = 36.2 years), gender (56.7\% female, 42.5\% male, 0.8\% other), and education level (41.7\% undergraduate, 29.9\% graduate, 20.5\% high school, 7.9\% other). The median completion time was 6.01 minutes.

\begin{figure}
    \centering
    \includegraphics[width=0.7\columnwidth]{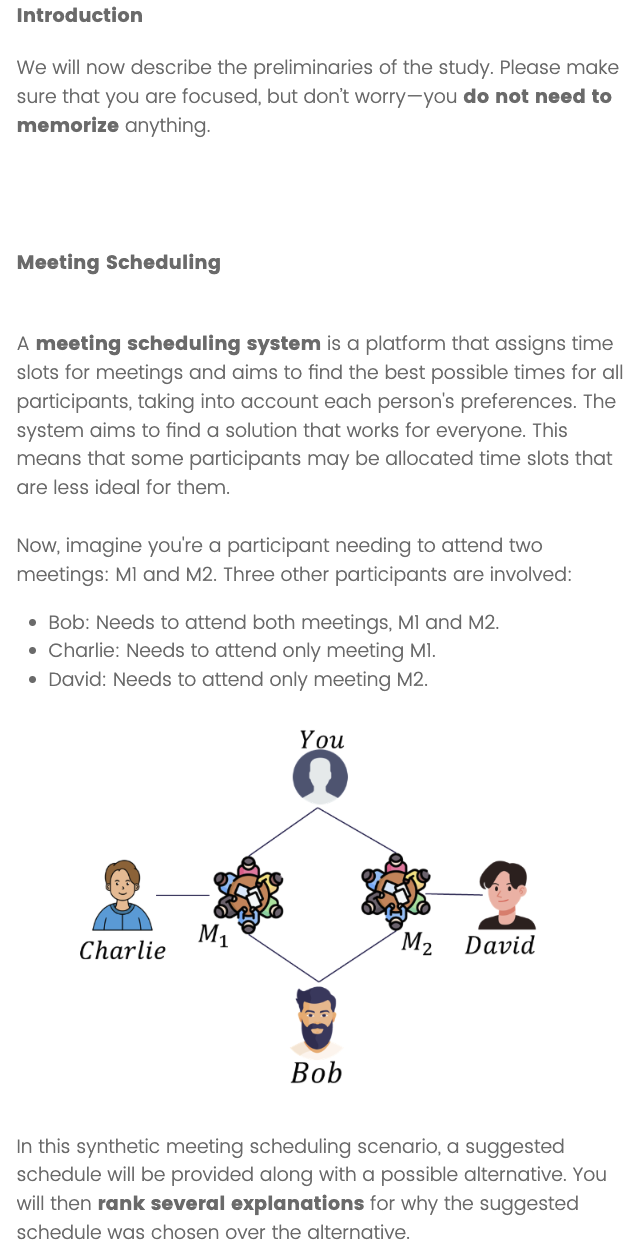}
    \caption{Introductory information about the scheduling problem.}
    \label{intro}
\end{figure}

\begin{figure}
    \centering
    \includegraphics[width=0.7\columnwidth]{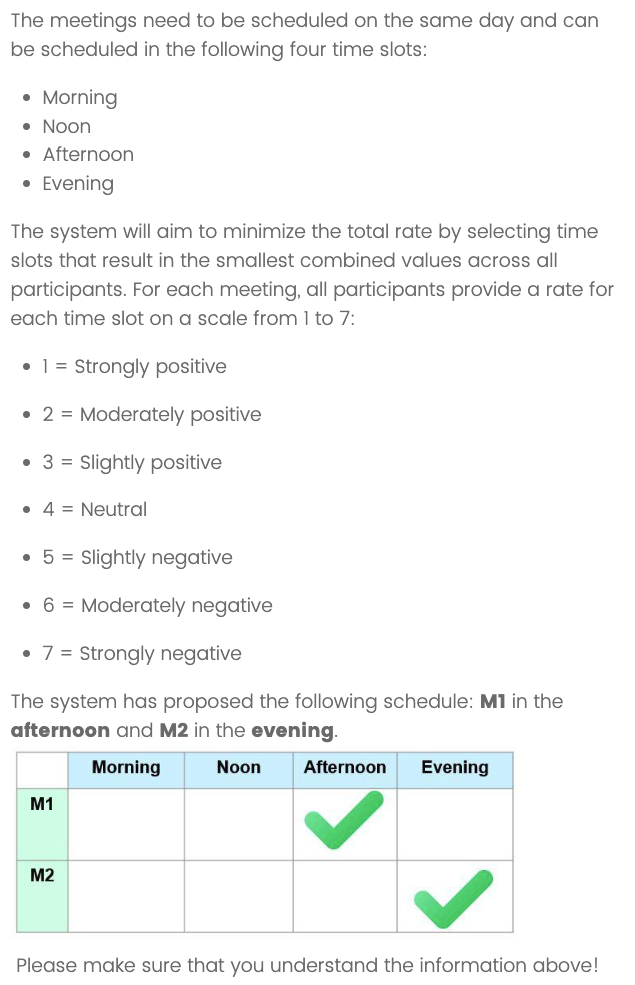}
    \caption{The proposed schedule by the system.}
    \label{schedule}
\end{figure}

\begin{figure}
    \centering
    \includegraphics[width=0.7\columnwidth]{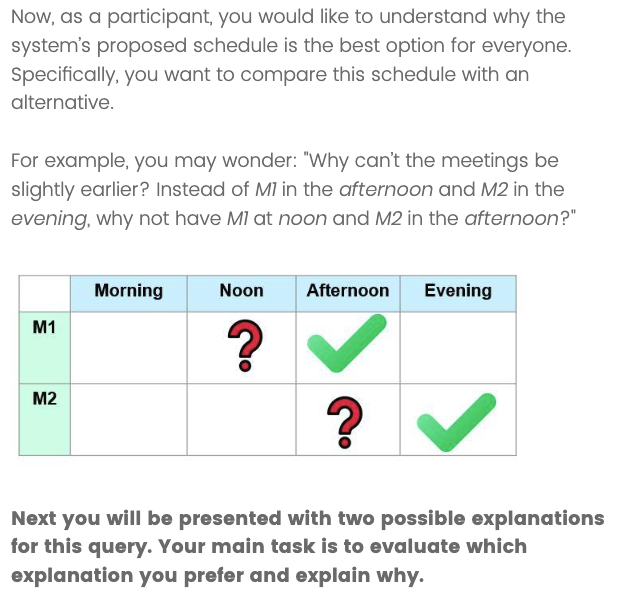}
    \caption{The query for the proposed schedule and description of the participants' upcoming task.}
    \label{query}
\end{figure}

\begin{figure}
    \centering
    \includegraphics[width=0.7\columnwidth]{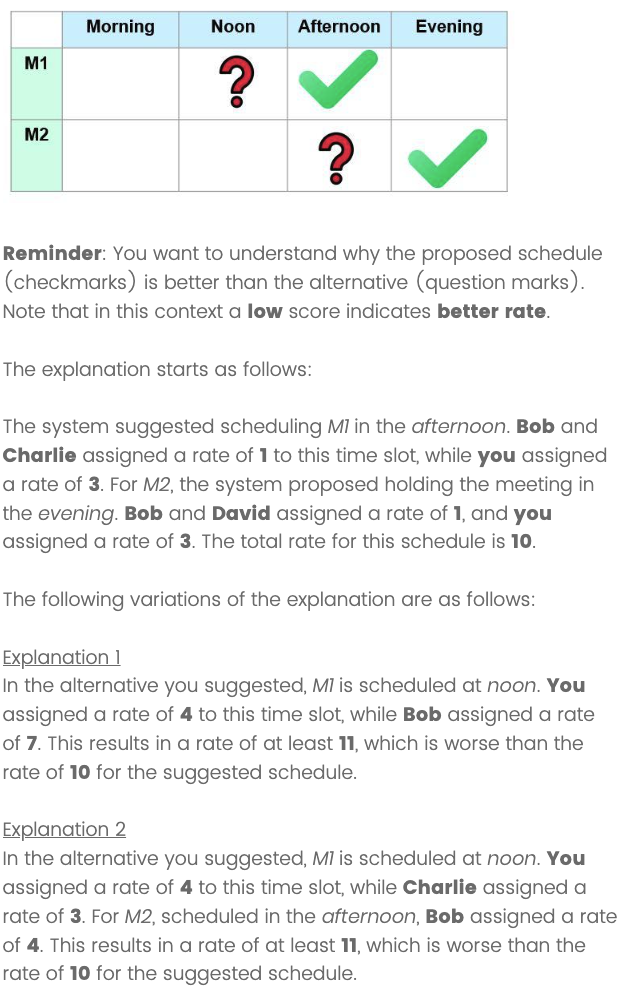}
    \caption{The explanations shown to the participants in Group 1. Note that Explanation 1 is of length two (i.e., two agent variables) and Explanation 2 is of length three (i.e., three agent variables).}
    \label{2x3}
\end{figure}

\begin{figure}
    \centering
    \includegraphics[width=0.7\columnwidth]{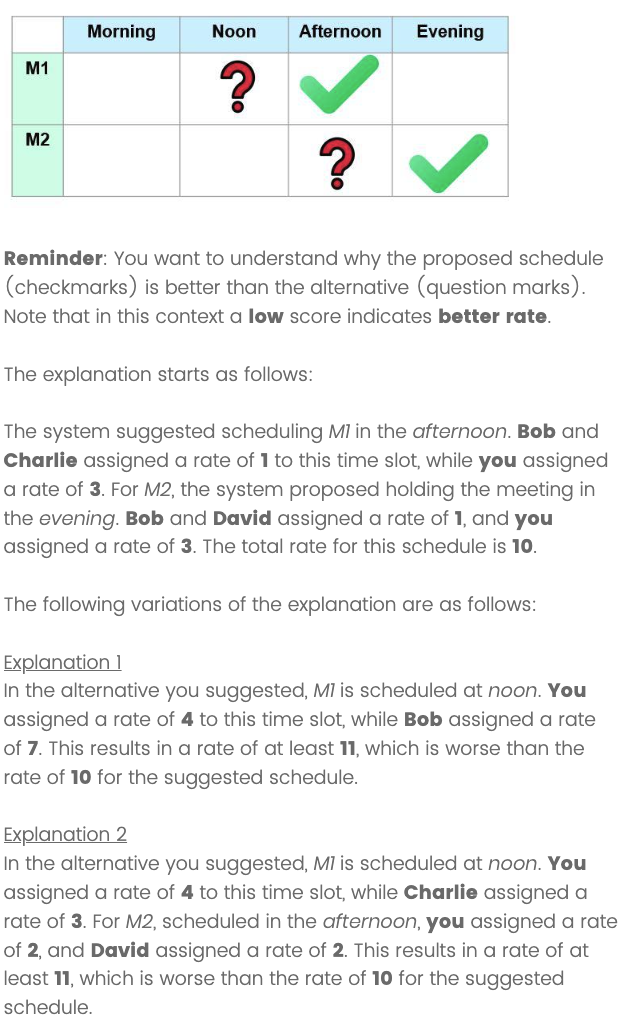}
    \caption{The explanations shown to the participants in Group 2. Note that Explanation 1 is of length two (i.e., two agent variables) and Explanation 2 is of length four (i.e., four agent variables).}
    \label{2x4}
\end{figure}

\begin{figure}
    \centering
    \includegraphics[width=0.8\linewidth]{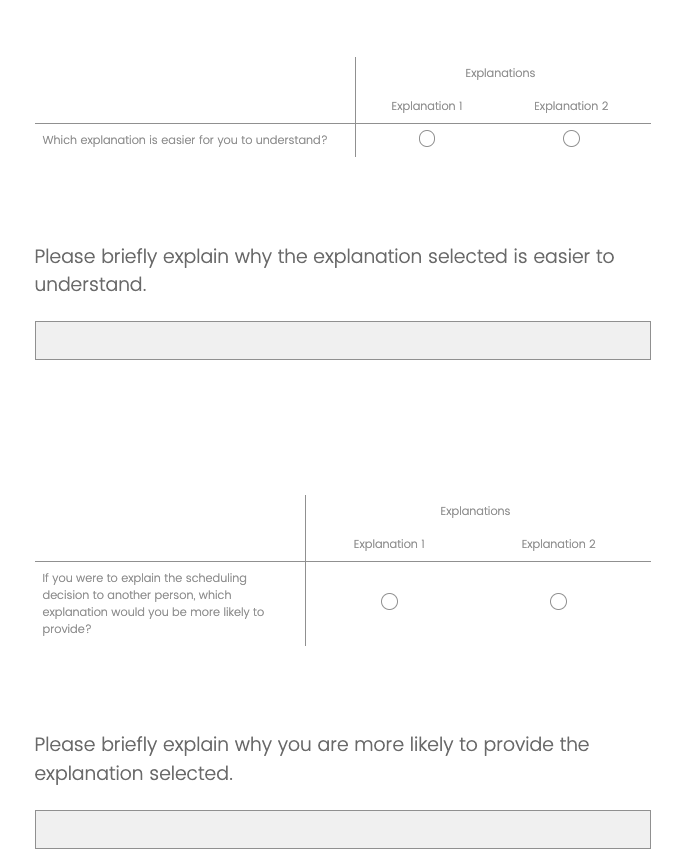}
    \caption{The evaluation questions shown to the participants in both groups.}
    \label{questions}
\end{figure}


\end{document}